\newtheorem{theorem}{Theorem}[section]
\newtheorem{definition}[theorem]{Definition}
\newtheorem{lemma}[theorem]{Lemma}
\newtheorem{corollary}[theorem]{Corollary}
\newcommand{\calX}{\mathcal{X}}
\newcommand{\adv}{x_{adv}}
\newcommand{\yz}[1]{\textcolor{black}{#1}}
\def\bbE{\mathop{\mathbb{E}}}
\def\supp{\text{supp}}
\newcommand{\raccuracy}{astuteness}
\newcommand{\racc}{Ast}
\newcommand{\raccurate}{astute}
\def\E{\mathbb{E}}
\def\calX{\mathcal{X}}
\def\E{\mathbb{E}}
\def\mone{\mathbf{1}}
\def\bbR{\mathbb{R}}
\def\calH{\mathcal{H}}
\title{Analyzing the Robustness of Nearest Neighbors to Adversarial Examples}
\author{
  Yizhen~Wang \\
  University of California, San Diego\\
  \texttt{yiw248@eng.ucsd.edu} \\
  \and
  Somesh~Jha \\
  University of Wisconsin-Madison \\
  \texttt{jha@cs.wisc.edu} \\
  \and
  Kamalika~Chaudhuri \\
  University of California, San Diego\\
  \texttt{kamalika@cs.eng.ucsd.edu} \\
}
\date{}
\begin{document}
\maketitle

\begin{abstract}

Motivated by safety-critical applications, test-time attacks on classifiers via
adversarial examples has recently received a great deal of attention. However,
there is a general lack of understanding on why adversarial examples arise;
whether they originate due to inherent properties of data or due to lack of
training samples remains ill-understood. In this work, we introduce a theoretical
framework analogous to bias-variance theory for understanding these effects.
We use our framework to analyze the robustness of a canonical non-parametric
classifier -- the $k$-nearest neighbors. Our analysis shows that its robustness
properties depend critically on the value of $k$ -- the classifier may be inherently
non-robust for small k, but its robustness approaches that of the Bayes Optimal
classifier for fast-growing $k$. We propose a novel modified $1$-nearest neighbor
classifier, and guarantee its robustness in the large sample limit. Our
experiments \footnote{Code available at: \url{https://github.com/EricYizhenWang/robust_nn_icml}} 
suggest that this classifier may have good robustness properties
even for reasonable data set sizes.
\end{abstract}

\section{Introduction}

Machine learning is increasingly applied in security-critical domains
such as automotive systems, healthcare, finance and robotics. To
ensure safe deployment in these applications, there is an increasing
need to design machine-learning algorithms that are robust in the
presence of adversarial attacks.

A realistic attack paradigm that has received a lot of recent
attention~\cite{goodfellow2014explaining,papernot2015limitations,szegedy2013intriguing,papernot2017}
is test-time attacks via {\em{adversarial examples}}. Here, an
adversary has the ability to provide modified test inputs to an
already-trained classifier, but cannot modify the training process in
any way.  Their goal is to perturb legitimate test inputs by a ``small
amount'' in order to force the classifier to report an incorrect
label. An example is an adversary that replaces a stop sign by a
slightly defaced version in order to force an autonomous vehicle to
recognize it as an yield sign.  This attack is undetectable to the
human eye if the perturbation is small enough.

Prior work has considered adversarial examples in the context of
linear classifiers~\cite{MeekLowd2005}, kernel SVMs~\cite{biggio2013evasion} and neural networks~\cite{szegedy2013intriguing, goodfellow2014explaining,
  papernot2017, papernot2015limitations, deepfool}. However, most of
this work has either been empirical, or has focussed on developing
theoretically motivated attacks and defenses. Consequently, there is a
general lack of understanding on why adversarial examples arise;
whether they originate due to inherent properties of data or due to
lack of training samples remains ill-understood.

This work develops a theoretical framework for robust learning in order to
understand the effects of distributional properties and finite samples on
robustness. Building on traditional bias-variance
theory~\cite{FriedmanHastieTibshirani2000}, we posit that a classification
algorithm may be robust to adversarial examples due to three reasons.  First,
it may be {\it distributionally robust}, in the sense that the output
classifier is robust as the number of training samples grow to infinity.
Second, even the output of a distributionally robust classification algorithm
may be vulnerable due to too few training samples -- this is characterized by
finite sample robustness. Finally, different training algorithms might result
in classifiers with different degrees of robustness, which we call {\it
algorithmic robustness}. These quantities are analogous to bias, variance and
algorithmic effects respectively.

Next, we analyze a simple non-parametric classification algorithm:
{\it $k$-nearest neighbors} in our framework. Our analysis
demonstrates that large sample robustness properties of this algorithm
depend very much on $k$.

Specifically, we identify two distinct regimes for $k$ with vastly different
robustness properties. When $k$ is constant, we show that $k$-nearest neighbors
has zero robustness in the large sample limit in regions where $p(y=1|x)$ lies
in $(0, 1)$. This is in contrast with accuracy, which may be quite high in
these regions. For $k = \Omega(\sqrt{d n \log n})$, where $d$ is the data
dimension and $n$ is the sample size, we show that the robustness region of
$k$-nearest neighbors approaches that of the Bayes Optimal classifier in the
large sample limit. This is again in contrast with accuracy, where convergence
to the Bayes Optimal accuracy is known for a much slower growing
$k$~\cite{Lugosi, CD14}.

Since $k = \Omega(\sqrt{d n \log n})$ is too high to use in
practice with nearest neighbors, we next propose a novel robust version of the $1$-nearest neighbor
classifier that operates on a modified training set.  We provably show that
in the large sample limit, this algorithm has superior robustness to standard
$1$-nearest neighbors for data distributions with certain
properties. 

Finally, we validate our theoretical results by empirically evaluating our
algorithm on three datasets against several popular attacks. Our experiments
demonstrate that our algorithm performs better than or about as well as both
standard $1$-nearest neighbors and nearest neighbors with adversarial training -- a
popular and effective defense mechanism. This suggests that although our
performance guarantees hold in the large sample limit, our algorithm may have
good robustness properties even for realistic training data sizes. 

\subsection{Related Work}

Adversarial examples have recently received a great deal of
attention~\cite{goodfellow2014explaining,biggio2013evasion,papernot2015limitations,szegedy2013intriguing,papernot2017}.
Most of the work, however, has been empirical, and has focussed on developing
increasingly sophisticated attacks and defenses.

\subsubsection{Related Work on Adversarial Examples}

Prior theoretical work on adversarial examples falls into two
categories -- analysis and theory-inspired defenses. Work on analysis
includes~\cite{nips2016thy}, which analyzes the robustness of linear
and quadratic classifiers under random and semi-random perturbations.
\cite{Hein2017} provides robustness guarantees on linear and kernel
classifiers trained on a given data set. \cite{advspheres} shows that
linear classifiers for high dimensional datasets may have inherent
robustness-accuracy trade-offs.

Work on theory-inspired defenses include~\cite{Madry2017, kolter2017,
duchi2017}, who provide defense mechanisms for adversarial examples in neural
networks that are relaxations of certain principled optimization objectives.
\cite{barett} shows how to use program verification to certify robustness of
neural networks around given inputs for small neural networks. 

Our work differs from these in two important ways. First, unlike most prior work
which looks at a given training dataset, we consider effects of the data
distribution and number of samples, and analyze robustness properties in the large
sample limit. Second, unlike prior work which largely focuses on parametric
methods such as neural networks, our focus is on a canonical non-parametric
method -- the nearest neighbors classifier. 

\subsubsection{Related Work on Nearest Neighbors}

There has been a body of work on the convergence and consistency of
nearest-neighbor classifiers and their many variants~\cite{CH67, S77, KP95,
devroyebook, CD14, kontorovich1}; all these works however
consider accuracy and not robustness.

In the asymptotic regime, \cite{CH67} shows that the accuracy of $1$-nearest
neighbors converges in the large sample limit to $1-2R^*(1 - R^*)$ where $R^*$ is
the error rate of the Bayes Optimal classifier. This implies that even
$1$-nearest neighbor may achieve relatively high accuracy even when $p(y=1|x)$
is not $0$ or $1$. In contrast, we show that $1$-nearest neighbor is
{\em{inherently non-robust}} when $p(y=1|x) \in (0, 1)$ under some continuity conditions.

For larger $k$, the accuracy of $k$-nearest neighbors is known to converge to
that of the Bayes Optimal classifier if $k_n \rightarrow \infty$ and $k_n/n
\rightarrow 0$ as the sample size $n \rightarrow \infty$. We show that the
robustness also converges to that of the Bayes Optimal classifier when $k_n$
grows at a much higher rate -- fast enough to ensure uniform convergence.
Whether this high rate is necessary remains an intriguing open question.

Finite sample rates on the accuracy of nearest neighbors are known
to depend heavily on properties of the data distribution, and there is no
distribution free rate as in parametric methods~\cite{devroyebook}. \cite{CD14}
provides a clean characterization of the finite sample rates of nearest
neighbors as a function of natural {\em{interiors}} of the classes. Here we
build on their results by defining a stricter, more robust version of interiors
and providing bounds as functions of these new robust quantities.

\subsubsection{Other Related Work}
~\cite{NTT} provides a method for generating adversarial examples for nearest
neighbors, and shows that the effectiveness of attacks grow with intrinsic
dimensionality.  Finally,~\cite{papernot2016transferability, papernot2017}
provides {\em{black-box}} attacks on substitute classifiers; their
experiments show that attacks from other types of substitute classifiers are
not successful on nearest neighbors; our experiments corroborate these results.

\section{The Setting and Definitions}
\label{sec:setting}

\subsection{The Basic Setup}

We consider test-time attacks in a white box setting, where the
adversary has full knowledge of the training process -- namely, the
type of classifier used, the training data and any parameters -- but
cannot modify training in any way.

Given an input $x$, the adversary's goal is to perturb it so as to
force the trained classifier $f$ to report a different label than
$f(x)$. The amount of perturbation is measured by an
application-specific metric $d$, and is constrained to be within a
radius $r$. 
\yz{Our analysis can be extended to any metric, but for this paper we assume that $d$ is the Euclidean distance for mathematical simplicity; we also focus on binary classification, and leave extensions to multiclass for future work.}

Finally, we assume that unlabeled instances are drawn from an instance
space $\calX$, and their labels are drawn from the label space $
\{ 0, 1 \}$. There is an underlying data distribution $D$ that
generates labeled examples; the marginal over $\calX$ of $D$ is $\mu$
and the conditional distribution of labels given $x$ is denoted by
$\eta$.

\subsection{Robustness and \raccuracy}

We begin by defining robustness, which for a classifier $f$ at input $x$ is measured by the robustness radius.

\begin{definition} [Robustness Radius]
The robustness radius of a classifier $f$ at an instance $x \in
\calX$, denoted by $\rho(f, x)$, is the shortest distance between $x$
and an input $x'$ to which $f$ assigns a label different from $f(x)$:
\[ \rho(f, x) = \inf_{r} \{ \exists x' \in \calX \cap B(x, r) {\text{\ s.t\ }} f(x) \neq f(x') \} \]
\end{definition}

Observe that the robustness radius measures a classifier's local robustness. 
\yz{A classifier $f$ with robustness radius $r$ at $x$ guarantees that no adversarial example of $x$ with norm of perturbation less than $r$ can be created using any attack method.}
A plausible way to extend this into a global notion is to require a lower bound
on the robustness radius everywhere; however, only the constant classifier will
satisfy this condition. Instead, we consider robustness around
{\em{meaningful instances}}, that we model as examples drawn from the
underlying data distribution. 

\begin{definition}[Robustness with respect to a Distribution]
The robustness of a classifier $f$ at radius $r$ with respect to a
distribution $\mu$ over the instance space $\calX$, denoted by $R(f,
r, \mu)$, is the fraction of instances drawn from $\mu$ for which the
robustness radius is greater than or equal to $r$.
\[ R(f, r, \mu) = \Pr_{x \sim \mu} (\rho(f, x) \geq r) \]
\end{definition}

Finally, observe that we are interested in classifiers that are
{\em{both}} robust and accurate. This leads to the notion of
\raccuracy, which measures the fraction of instances on which a
classifier is both accurate and robust.

\begin{definition}[\raccuracy]
The \raccuracy\ of a classifier $f$ with respect to a data
distribution $D$ and a radius $r$ is the fraction of examples on which
it is accurate and has robustness radius at least $r$; formally,
\[ \racc_{D}(f, r) = \Pr_{(x, y) \sim D}( \rho(f, x) \geq r, f(x) = y ), \]
\end{definition}

Observe that \raccuracy\ is analogous to classification accuracy, and
we argue that it is a more appropriate metric if we are concerned with
both robustness and accuracy. Unlike accuracy, \raccuracy\ cannot be
directly empirically measured unless we have a way to certify a lower
bound on the robustness radius. In this work, we will prove
bounds on the \raccuracy\ of classifiers, and in our
experiments, we will approximate it by measuring resistance to
standard attacks.

\subsection{Sources of Robustness}

There are three plausible reasons why classifiers lack robustness --
{\it distributional, finite sample} and {\it algorithmic}. These
sources are analogous to bias, variance, and algorithmic effects
respectively in standard bias-variance theory.

Distributional robustness measures the effect of the data distribution
on robustness when an infinitely large number of samples are used
to train the classifier. Formally, if $S_n$ is a training sample of
size $n$ drawn from $D$ and $A(S_n, \cdot)$ is a classifier obtained by applying
the training procedure $A$ on $S_n$, then the distributional
robustness at radius $r$ is $\lim_{n \rightarrow \infty}
\bbE_{S_n \sim D} [ R(A(S_n, \cdot), r, \mu)]$.

In contrast, for finite sample robustness, we characterize the behaviour of
$R(A(S_n, \cdot), r, \mu)$ for finite $n$ -- usually by putting high
probability bounds over the training set. Thus, finite sample robustness
depends on the training set size $n$, and quantifies how it changes
with sample size.  Finally, robustness also depends on the training
algorithm itself; for example, some variants of nearest neighbors may have higher
robustness than nearest neighbors itself.

\subsection{Nearest Neighbor and Bayes Optimal Classifiers}

Given a training set $S_n = \{ (X_1, Y_1), \ldots, (X_n, Y_n)\}$ and a
test example $x$, we use the notation $X^{(i)}(x)$ to denote the
$i$-th nearest neighbor of $x$ in $S_n$, and $Y^{(i)}(x)$ to denote
the label of $X^{(i)}(x)$.

Given a test example $x$, the $k$-nearest neighbor classifier
$A_k(S_n, x)$ outputs:
\begin{eqnarray*}
& = 1, & {\text{if\ }} Y^{(1)}(x) + \ldots + Y^{(k)}(x) \geq k/2  \\
& = 0, & {\text{otherwise.}}
\end{eqnarray*}

The Bayes optimal classifier $g$ over a data distribution $D$ has the
following classification rule:
\begin{equation}
g(x) = \left\{ \begin{array}{ll}
         1 & \mbox{if $\eta(x)=\Pr(y=1|x) \geq 1/2$};\\
         0 & \mbox{otherwise}.\end{array} \right.
\end{equation}

\section{Robustness of Nearest Neighbors}
\label{sec:nnanalysis}

How robust is the $k$-nearest neighbor classifier? We show that it depends on
the value of $k$. Specifically, we identify two distinct regimes -- constant
$k$ and $k = \Omega(\sqrt{d n \log n})$ where $d$ is the data dimension -- and
show that nearest neighbors has different robustness properties in the two. 

\subsection{Low $k$ Regime}

In this region, $k$ is a constant that does not depend on the training
set size $n$. Provided certain regularity conditions hold, we show
that $k$-nearest neighbors is inherently non-robust in this regime
unless $\eta(x) \in \{ 0, 1 \}$ -- in the sense that the
distributional robustness becomes $0$ in the large sample limit.

\begin{theorem}
Let $x \in \calX \cap \supp(\mu)$ such that (a) $\mu$ is absolutely
continuous with respect to the Lebesgue measure (b) $\eta(x) \in (0,
1)$ (c) $\eta$ is continuous with respect to the Euclidean metric in a
neighborhood of $x$. Then, for fixed $k$, $\rho(A_k(S_n, \cdot), x)$
converges in probability to $0$.
\label{thm:knnlowerbound}
\end{theorem}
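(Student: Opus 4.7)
The plan is to prove the stronger statement that, with probability tending to $1$, the random classifier $A_k(S_n,\cdot)$ takes both labels $0$ and $1$ inside every ball $B(x,\delta)$ around $x$. This forces $\rho(A_k(S_n,\cdot),x)\le \delta$ regardless of the particular value of $A_k(S_n,x)$, and since $\delta>0$ is arbitrary, convergence in probability to $0$ follows.

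To set up, continuity of $\eta$ with $\eta(x)\in(0,1)$ supplies some $\delta_0>0$ and $\alpha>0$ with $\eta(z)\in[\alpha,1-\alpha]$ for all $z\in B(x,\delta_0)$; absolute continuity of $\mu$ combined with $x\in\supp(\mu)$ supplies (after possibly shrinking $\delta_0$) a uniform lower bound $\mu(B(z,r))\ge c\,r^d$ for $z\in B(x,\delta_0)$ and small $r$, via Lebesgue differentiation at the density point $x$. Fix $\delta\in(0,\delta_0)$. I then choose $r_n\asymp (k/n)^{1/d}$ so that $n\mu(B(z,r_n))=\Theta(k)$ uniformly for $z\in B(x,\delta)$, and pack $M_n=\Theta(\delta^d n/k)$ disjoint balls $B_i=B(z_i,r_n)\subset B(x,\delta)$.

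For each $i$, let $E_i^{(0)}$ be the event that $B_i$ contains exactly $k$ training points and that all $k$ carry label $0$. A Stirling estimate at the mode of $\mathrm{Binomial}(n,\mu(B_i))$ together with the lower bound $\eta\ge\alpha$ yields a constant $p^\ast>0$ with $\Pr[E_i^{(0)}]\ge p^\ast$ for every $i$ and all large $n$. Setting $Z=\sum_i\mathbf{1}[E_i^{(0)}]$ gives $\E[Z]\ge p^\ast M_n \to\infty$. Poissonizing the sample size $n$ makes the counts in disjoint balls independent, so the indicators $\mathbf{1}[E_i^{(0)}]$ become independent and $\mathrm{Var}(Z)\le \E[Z]$; Chebyshev's inequality then yields $\Pr[Z=0]\le 1/\E[Z]\to 0$, and the standard Poissonization coupling transfers the bound back to the original i.i.d.\ sample. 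On $\{Z\ge 1\}$, any witnessing $B_i$ yields a point $z_i$ whose $k$ nearest training points are precisely the $k$ label-$0$ points inside $B_i$ (every training point outside $B_i$ lies at distance $\ge r_n$ from $z_i$, hence is strictly farther than all of them), so $A_k(S_n,z_i)=0$ with $\|x-z_i\|\le\delta$. The symmetric argument with label $1$ produces a $z_i'$ with $A_k(S_n,z_i')=1$ and $\|x-z_i'\|\le\delta$, completing the plan.

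The main obstacle I anticipate is the uniform lower bound $\Pr[E_i^{(0)}]\ge p^\ast$: it couples a sharp two-sided binomial estimate at the mode (routine via Stirling) with a uniform local mass bound $\mu(B(z,r))\ge c r^d$ on a neighborhood of $x$. The latter is immediate if $\mu$ has a density that is continuous and strictly positive at $x$, and it holds at $\mu$-almost-every $x$ by Lebesgue differentiation; in borderline cases where the density behaves badly at $x$, I would instead choose the ball radii $r_n^{(i)}$ adaptively so that each $B_i$ carries prescribed $\mu$-mass $k/n$, keeping the rest of the argument intact.
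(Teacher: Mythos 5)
Your overall skeleton is the same as the paper's: exhibit, inside every ball $B(x,\delta)$ around $x$, one point where the $k$-NN prediction is $0$ and another where it is $1$, which forces $\rho(A_k(S_n,\cdot),x)\le\delta$. But your probabilistic implementation relies on local regularity of $\mu$ that the theorem does not assume, and this is a genuine gap. The hypotheses give only that $\mu$ is absolutely continuous, $x\in\supp(\mu)$, and $\eta$ is continuous near $x$ with $\eta(x)\in(0,1)$. That does not yield the uniform two-sided estimate $n\mu(B(z,r_n))=\Theta(k)$ for all $z\in B(x,\delta)$ with $r_n\asymp(k/n)^{1/d}$: the density may vanish on open subsets of $B(x,\delta_0)$ (the support of an absolutely continuous measure can have holes accumulating at $x$), may vanish at $x$ itself, or may be locally unbounded. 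Lebesgue differentiation does not rescue this --- it is an almost-everywhere statement, so it says nothing about the particular $x$ fixed in the theorem, and even at a genuine Lebesgue point of positive density it controls only balls centered at $x$, not the packed balls $B(z_i,r_n)$. As primarily stated, your argument therefore proves a weaker theorem, one that additionally assumes a density bounded above and below near $x$.

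Your fallback --- choosing radii adaptively so each ball carries mass exactly $k/n$ --- repairs the mass estimate but breaks the de-Poissonization step. When the density is badly behaved, the number $M_n$ of disjoint mass-$(k/n)$ balls that fit inside $B(x,\delta)$ may grow arbitrarily slowly with $n$. The event $E_i^{(0)}$ is not monotone under adding sample points, so no subset coupling transfers it; the standard transfer is $\Pr_{\mathrm{Bin}}[Z=0]\le e\sqrt{n}\,\Pr_{\mathrm{Po}}[Z=0]\le e\sqrt{n}/(p^\ast M_n)$, which requires $M_n\gg\sqrt{n}$ --- true in your primary setting where $M_n=\Theta(n/k)$, but not guaranteed in the fallback. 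The paper's proof avoids both problems at once: fix a number $m$ of disjoint balls of \emph{constant} positive mass with centers in $\supp(\mu)\cap B(x,\delta)$ (such centers exist in abundance, since every neighborhood of a support point has positive mass), and let Hoeffding place at least $k$ sample points in every one of them once $n$ is large. The $k$ nearest neighbors of the $m$ centers then lie in disjoint balls, so, conditioned on positions, the label events are independent across balls, and each center has all its $k$ nearest neighbors labeled $0$ with probability at least $\alpha^k$. Hence a \emph{fixed} $m\ge\log(3/\delta)/\alpha^k$ suffices; no binomial mode estimate, Poissonization, or second-moment argument is needed, and only a one-sided lower bound on ball masses (automatic for balls centered at support points) is ever used. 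If you replace your ``exactly $k$ points'' event by ``at least $k$ points'' and condition on the counts in this way, your argument goes through under the stated hypotheses.
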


\paragraph{Remarks.} 
Observe that Theorem~\ref{thm:knnlowerbound} implies that the
distributional robustness (and hence astuteness) in a region where
$\eta(x) \in (0, 1)$ is $0$.  This is in contrast with accuracy;
for $1$-NN, the accuracy converges to \yz{$1 - 2 R^* (1 - R^*)$} as $n \rightarrow
\infty$, where $R^*$ is the \yz{error rate} of the Bayes Optimal classifier,
and thus may be quite high.

The proof of Theorem~\ref{thm:knnlowerbound} in the Appendix shows
that the absolute continuity of $\mu$ with respect to the Lebesgue
measure is not strictly necessary; absolute continuity with respect to
an embedded manifold will give the same result, but will result in a
more complex proof.

In the Appendix A (Theorem A.2), we show that $k$-nearest neighbor
is \raccurate\ in the interior of the region where $\eta \in \{ 0, 1 \}$, and
provide finite sample rates for this case. 

\subsection{High $k$ Regime}

Prior work has shown that in the large sample limit, the accuracy of the
nearest neighbor classifiers converge to the Bayes Optimal, provided $k$ is set
properly. We next show that if $k$ is $\Omega(\sqrt{d n \log n})$, the regions
of robustness and the \raccuracy\ of the $k$ nearest neighbor classifiers also
approach the corresponding quantities for the Bayes Optimal classifier as $n
\rightarrow \infty$. Thus, if the Bayes Optimal classifier is robust, then so
is $k$-nearest neighbors in the large sample limit. 

The main intuition is that $k = \Omega(\sqrt{d n \log n})$ is large enough for
{\em{uniform convergence}} -- where, with high probability,
{\em{all}} Euclidean balls with $k$ examples have the property that the
empirical averages of their labels are close to their expectations. This
guarantees that for any $x$, the $k$-nearest neighbor reports the same label as
the Bayes Optimal classifier for {\em{all}} $x'$ close to $x$. Thus, if the
Bayes Optimal classifier is robust, so is nearest neighbors.

\subsubsection{Definitions}

 We begin with some definitions that we can use to characterize the
 robustness of the Bayes Optimal classifier. Following~\cite{CD14},
 we use the notation $B^o(x, r)$ to denote an open ball and $B(x, r)$ to denote a closed ball of
 radius $r$ around $x$. We define the probability radius of a ball
 around $x$ as:
\[ r_p(x) = \inf\{ r \; | \; \mu(B(x, r)) \geq p \} \]
We next define the $r$-robust $(p, \Delta)$-strict interiors as follows:
\begin{eqnarray} \label{eqn:defint}
    \calX^+_{r, \Delta, p} & = & \{ x \in supp(\mu) \; | \; \forall x' \in B^o(x, r), \nonumber \\ && \forall x'' \in B(x', r_p(x')),  \eta(x'') > 1/2 + \Delta \} \nonumber \\
    \calX^-_{r, \Delta, p} & = & \{ x \in supp(\mu) \; | \; \forall x' \in B^o(x, r), \nonumber \\ && \forall x'' \in B(x', r_p(x')),  \eta(x'') < 1/2 - \Delta \} \nonumber
\end{eqnarray}

What is the significance of these interiors? Let $x'$ be an instance such that
all $x'' \in B(x', r_p(x'))$ have $\eta(x'') > 1/2 + \Delta$. If $p \approx
\frac{k}{n}$, then the $k$ points $x''$ closest to $x'$ have $\eta(x'') > 1/2 +
\Delta$. Provided the average of the labels of these points is close to
expectation, which happens when $k$ is large relative to $1/\Delta$,
$k$-nearest neighbor outputs label $1$ on $x'$. When $x$ is in the $r$-robust
$(p, \Delta)$-strict interior region $\calX^+_{r, \Delta, p}$, this is true for
all $x'$ within distance $r$ of $x$, which means that $k$-nearest neighbors
will be robust at $x$.  Thus, the $r$-robust $(p, \Delta)$-strict
interior is the region where we natually expect $k$-nearest neighbor to have robustness
radius $r$, when $k$ is large relative to $\frac{1}{\Delta}$ and $p \approx
\frac{k}{n}$. 

Readers familiar with~\cite{CD14} will observe that the set of all
$x'$ for which $\forall x'' \in B(x', r_p(x')), \eta(x'') > 1/2 +
\Delta$ forms a stricter version of the $(p, \Delta)$-interiors of the
$1$ region that was defined in this work; these $x'$ also represent
the region where $k$-nearest neighbors are accurate when $k \approx
\max(np, 1/\Delta^2)$. The $r$-robust $(p, \Delta)$-strict interior is
thus a somewhat stricter and more robust version of this definition.

\subsubsection{Main Results}

We begin by characterizing where the Bayes Optimal classifier is
robust.

\begin{theorem}
The Bayes Optimal classifier has robustness radius $r$ at $x \in
\calX^+_{r, 0, 0} \cup \calX^-_{r, 0, 0}$. Moreover, its
\raccuracy\ is $\E[ \eta(x) \mathbf{1}(x \in \calX^+_{r, 0, 0})] +
\E[(1 - \eta(x)) \mathbf{1}(x \in \calX^-_{r, 0, 0})]$.
\label{thm:bayesoptrobustness}
\end{theorem}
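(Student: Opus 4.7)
The plan is to first observe that with $p = \Delta = 0$ the definitions collapse dramatically: since $\mu(B(x',0)) \geq 0$ trivially, $r_0(x') = 0$, so $B(x', r_p(x')) = \{x'\}$ and the condition ``$\forall x'' \in B(x', r_p(x')), \eta(x'') > 1/2$'' reduces to $\eta(x') > 1/2$. Hence
\[
\calX^+_{r,0,0} = \{ x \in \supp(\mu) : \forall x' \in B^o(x,r),\ \eta(x') > 1/2 \},
\]
and symmetrically for $\calX^-_{r,0,0}$. In particular $g(x) = 1$ everywhere on $\calX^+_{r,0,0}$ and $g(x) = 0$ everywhere on $\calX^-_{r,0,0}$.

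For the robustness-radius claim, I would fix $x \in \calX^+_{r,0,0}$ and note that every $x' \in B^o(x,r)$ satisfies $\eta(x') > 1/2$, hence $g(x') = 1 = g(x)$. Thus no point in the open ball of radius $r$ around $x$ carries a different Bayes label, so by the definition of $\rho$ we get $\rho(g,x) \geq r$. The argument for $\calX^-_{r,0,0}$ is the mirror image.

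For the astuteness formula I would write $\racc_D(g,r) = \E_x\bigl[\mathbf{1}(\rho(g,x) \geq r)\, \Pr(y = g(x) \mid x)\bigr]$ and split the outer expectation over three regions: $\calX^+_{r,0,0}$, $\calX^-_{r,0,0}$, and their complement in $\supp(\mu)$. On $\calX^+_{r,0,0}$ the indicator is $1$ by the first part and $\Pr(y = g(x) \mid x) = \eta(x)$; on $\calX^-_{r,0,0}$ this becomes $1 - \eta(x)$. For points $x$ in neither region, negating the two defining conditions yields $x_1', x_2' \in B^o(x,r)$ with $\eta(x_1') \leq 1/2$ and $\eta(x_2') \geq 1/2$, and generically this forces a label flip in the open ball, giving $\rho(g,x) < r$ and killing the contribution. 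The one delicate point, which I would flag as the main subtlety rather than a serious obstacle, is the boundary case $\eta(x_i') = 1/2$: the tie-breaking convention $g(x) = 1 \iff \eta(x) \geq 1/2$ can prevent the flip on a set where $\eta$ equals $1/2$ on an entire ball, but this does not affect the equality of the two sides of the formula because the claim only asserts the astuteness equals this sum, and such pathological level sets enter only through the tie-breaking and do not alter the decomposition above in the generic setting the theorem targets.
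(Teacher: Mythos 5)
Your proposal is correct and takes essentially the same approach as the paper: collapse the $p=\Delta=0$ definition so that $x \in \calX^{\pm}_{r,0,0}$ forces $\eta \gtrless 1/2$ at every $x' \in B^o(x,r)$, conclude that $g$ is constant on that ball (hence $\rho(g,x) \geq r$), and then integrate $\eta(x)$ (resp.\ $1-\eta(x)$) over the two regions. The only difference is that you also try to handle the converse direction needed for the stated equality and correctly flag the $\eta = 1/2$ tie-breaking case as the obstruction; the paper's own proof silently ignores this and in fact only establishes the lower bound (which is all that is used elsewhere), so on this point you are being more careful than the paper rather than missing anything.
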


The proof is in the Appendix, along with analogous results for
\raccuracy. The following theorem, along with a similar result for
\raccuracy, proved in the Appendix, characterizes robustness in the
large $k$ regime.

\begin{theorem}
For any $n$, pick a $\delta$ and a $\Delta_n \rightarrow 0$. There
exist constant $C_1$ and $C_2$ such that if $k_n \geq C_1
\frac{\sqrt{d n \log n + n \log(1/\delta_n)}}{\Delta_n}$, and $p_n
\geq \frac{k_n}{n} (1 + C_2 \sqrt{\frac{d \log n +
    \log(1/\delta)}{k_n}})$, then, with probability $\geq 1 - 3
\delta$, $k_n$-NN has robustness radius $r$ in $x \in \calX^+_{r,
  \Delta_n, p_n} \cup \calX^-_{r, \Delta_n, p_n}$.
\label{thm:knnrobustness}
\end{theorem}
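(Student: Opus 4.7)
The plan is to establish two uniform convergence statements that hold simultaneously over the training sample with probability at least $1-3\delta$: (i) for every Euclidean ball $B \subseteq \bbR^d$, the empirical mass $|S_n \cap B|/n$ is close to $\mu(B)$, and (ii) for every such ball that contains at least $k_n$ sample points, the empirical label average inside $B$ is within $\Delta_n/2$ of $\E[\eta(X) \mid X \in B]$. Since Euclidean balls in $\bbR^d$ form a class of VC dimension $O(d)$, standard VC-type relative-deviation bounds deliver both statements provided $k_n \geq C_1 \sqrt{d n \log n + n \log(1/\delta)}/\Delta_n$, as assumed. The constant $C_2$ in the hypothesis $p_n \geq (k_n/n)(1 + C_2 \sqrt{(d \log n + \log(1/\delta))/k_n})$ is chosen so that the multiplicative slack exactly absorbs the one-sided mass deviation in (i), ensuring that every ball of true $\mu$-mass $\geq p_n$ contains at least $k_n$ training points.

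Condition on this joint good event, fix $x \in \calX^+_{r,\Delta_n,p_n}$ (the $\calX^-_{r,\Delta_n,p_n}$ case is symmetric), and let $x' \in B^o(x, r)$ be arbitrary. Write $R_{k_n}(x')$ for the distance from $x'$ to its $k_n$-th nearest training point. By (i), the ball $B(x', r_{p_n}(x'))$ contains at least $k_n$ sample points, hence $R_{k_n}(x') \leq r_{p_n}(x')$, so every $k_n$-nearest neighbor $X^{(i)}(x')$ lies in $B(x', r_{p_n}(x'))$. By the definition of the $r$-robust strict interior, every point of this ball has $\eta > 1/2 + \Delta_n$, and in particular $\E[\eta(X) \mid X \in B(x', R_{k_n}(x'))] > 1/2 + \Delta_n$. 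Then (ii) forces the empirical label average $\tfrac{1}{k_n}\sum_{i=1}^{k_n} Y^{(i)}(x')$ to exceed $1/2 + \Delta_n/2 > 1/2$, so $A_{k_n}(S_n, x') = 1 = A_{k_n}(S_n, x)$. Since $x'$ was arbitrary in $B^o(x, r)$, the classifier is constant there, so $\rho(A_{k_n}(S_n, \cdot), x) \geq r$.

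The main obstacle is the uniform convergence step. A Hoeffding bound on a single ball only gives deviations of order $1/\sqrt{k_n}$, which would suffice for \emph{accuracy} at a fixed test point, but here I need both (i) and (ii) to hold \emph{simultaneously} over the uncountable family of balls $\{ B(x', R_{k_n}(x')) : x' \in B^o(x, r)\}$, since the classifier must be correct at every perturbation $x'$ in order to certify the robustness radius $r$. This forces an appeal to VC theory for Euclidean balls, and the $\sqrt{d n \log n}$ factor in the lower bound on $k_n$ is exactly what drives the VC complexity term $O(\sqrt{d \log n / k_n})$ below $\Delta_n/2$. A secondary subtlety is that $k$-NN uses the $k_n$ empirically closest points rather than points in a fixed-radius ball; the slack condition on $p_n$ is what couples the empirical $k_n$-NN ball of $x'$ to the deterministic ball $B(x', r_{p_n}(x'))$ used in the definition of the interior.
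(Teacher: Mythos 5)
Your proposal is correct and takes essentially the same route as the paper's proof: the paper likewise establishes VC-based uniform convergence over all Euclidean balls, both for the empirical measure (its Lemma on density concentration, used exactly as your step (i) to get $R_{k_n}(x') \leq r_{p_n}(x')$) and for the label mass $J(B) = \mathbb{E}[Y\cdot \mathbf{1}(X \in B)]$ (via a VC-dimension-$(d+1)$ argument and the relative-deviation bound of Chaudhuri--Dasgupta), and then runs the identical pointwise argument at every $x' \in B^o(x,r)$. The only cosmetic difference is that you phrase step (ii) as a normalized conditional label average within $\Delta_n/2$ of its mean, while the paper keeps the unnormalized quantity $\hat{J}(B)$ and shows it exceeds $k_n/2n$ directly; the two are equivalent given the mass concentration in (i).
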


\paragraph{Remarks.} 
Some remarks are in order. First, observe that as $n \rightarrow
\infty$, $\Delta_n$ and $p_n$ tend to $0$; thus, provided certain
continuity conditions hold, $\calX^+_{r, \Delta_n, p_n} \cup
\calX^-_{r, \Delta_n, p_n}$ approaches $\calX^+_{r, 0, 0} \cup
\calX^-_{r, 0, 0}$, the robustness region of the Bayes Optimal
classifier.

Second, observe that as $r$-robust strict interiors extend the
definition of interiors in~\cite{CD14},
Theorem~\ref{thm:knnrobustness} is a robustness analogue of Theorem 5
in this work. Unlike the latter, 
Theorem~\ref{thm:knnrobustness} has a more stringent requirement on
$k$. Whether this is necessary is left as an open question for future
work.

\section{A Robust 1-NN Algorithm}
\label{sec:defense}

Section~\ref{sec:nnanalysis} shows that nearest neighbors is robust for $k$ as large as $\Omega(\sqrt{d n \log n})$. However, this $k$ is too high to use in practice -- high values of $k$ require even higher sample sizes~\cite{CD14}, and lead to higher running times. Thus a natural question is whether we can find a more robust version of the algorithm for smaller $k$. In this section, we provide a more robust version of $1$-nearest neighbors, and analytically demonstrate its robustness. 

Our algorithm is motivated by the observation that $1$-nearest neighbor is robust when oppositely labeled points are far apart, and when test points lie close to training data. Most training datasets however contain nearby points that are oppositely labeled; thus, we propose to remove a subset of training points to enforce this property. 

Which points should we remove? A plausible approach is to keep the largest subset where oppositely labeled points are far apart; however, this subset has poor stability properties even for large $n$. Therefore, we propose to keep all points $x$ such that: (a) we are highly confident about the label of $x$ and its nearby points and (b) all points close to $x$ have the same label. Given that all such $x$ are kept, we remove as few points as possible, and execute nearest neighbors on the remaining dataset. 

The following definition characterizes data where oppositely labeled points are far apart. 

\begin{definition}[$r$-separated set]
A set $A = \{ (x_1, y_1), \ldots, (x_m, y_m) \}$ of labeled examples
is said to be $r$-separated if for all pairs $(x_i, y_i), (x_j, y_j)
\in A$, $\| x_i - x_j \| \leq r$ implies $y_i = y_j$.
\end{definition}

The full algorithm is described in Algorithm~\ref{alg:robust_1nn} and Algorithm~\ref{alg:confident_label}. Given confidence parameters $\Delta$ and $\delta$, Algorithm~\ref{alg:confident_label} returns a $0/1$ label when this label agrees with the average of $k_n$ points closest to $x$; otherwise, it returns $\bot$. $k_n$ is chosen such that with probability $\geq 1 - \delta$, the empirical majority of $k_n$ labels agrees with the majority in expectation, provided the latter is at least $\Delta$ away from $\frac{1}{2}$. 

Algorithm~\ref{alg:confident_label} is used to determine whether an $x_i$ should be kept. Let $f(x_i)$ be the output of Algorithm~\ref{alg:confident_label} on $x_i$.  If $y_i = f(x_i)$ and if for all $x_j \in B(x_i, r)$, $f(x_i) = f(x_j) = y_i$, then we mark $x_i$ as red. Finally, we compute the largest $r$-separated subset of the training data that includes all the red points; this reduces to a constrained matching problem as in~\cite{kontorovich1}. The resulting set, returned by Algorithm~\ref{alg:robust_1nn}, is our new training set. We observe that this set is $r$-separated from Lemma~\ref{lem:redseparated} in the Appendix, and thus oppositely labeled points are far apart. Moreover, we keep all $(x_i, y_i)$ when we are confident about the label of $x_i$ and its nearby points. \yz{Observe that our final procedure is a 1-NN algorithm, even though $k_n$ neighbors are used to determine if a point should be retained in the training set.}

\begin{algorithm}
\caption{Robust\_1NN($S_n$, $r$, $\Delta$, $\delta$, $x$)}
\begin{algorithmic}
\FOR {$(x_i, y_i)\in S_n$}
\STATE {$f(x_i) = \mbox{Confident-Label}(S_n, \Delta, \delta, x_i)$}
\ENDFOR
\STATE{$S_{RED}=\emptyset$}
\FOR {$(x_i, y_i) \in S_n$}
\IF {$f(x_i) = y_i$ and $f(x_i) = f(x_j)$ for all $x_j$ such that $\| x_i - x_j \| \leq r$ and $(x_j, y_j) \in S_n$}
\STATE {$S_{RED} = S_{RED}\bigcup \{(x_i, y_i)\}$}
\ENDIF
\ENDFOR
\STATE {Let $S'$ be the largest $2r$-separated subset of $S_n$ that contains all points in $S_{RED}$.}
\RETURN new training set $S'$
\end{algorithmic}
\label{alg:robust_1nn}
\end{algorithm}

\begin{algorithm}
\caption{Confident-Label($S_n$, $\Delta$, $\delta$, $x$)}
\begin{algorithmic}
\STATE{$k_n = 3\log(2n/\delta)/\Delta^2$}
\STATE{$\bar{y} = (1/k_n)\sum_{i=1}^{k_n}Y^{(i)}(x)$}
\IF {$\bar{y} \in [ \frac{1}{2} -\Delta, \frac{1}{2} + \Delta]$}
\RETURN{$\bot$}
\ELSE
\RETURN{$\frac{1}{2} sgn(\bar{y} - \frac{1}{2}) + \frac{1}{2}$}
\ENDIF
\end{algorithmic}
\label{alg:confident_label}
\end{algorithm}

\subsection{Performance Guarantees} 

The following theorem establishes performance guarantees for Algorithm~\ref{alg:robust_1nn}. 

\begin{theorem}\label{thm:defense}
Pick a $\Delta_n$ and $\delta$, and set $k_n = 3 \log (2n/\delta)/\Delta_n^2$. Pick a {\em{margin parameter}} $\tau$. Then, there exist constants $C$ and $C_0$ such that the following hold. If we set $p_n = \frac{k_n}{n}(1 + C \sqrt{ \frac{d \log n + \log(1/\delta)}{k_n}})$, and define the set:
\begin{eqnarray*} X_R & = & \Bigg{\{} x \Big{|} x \in \calX^+_{r + \tau, \Delta_n, p_n} \cup \calX^-_{r + \tau, \Delta_n, p_n}, \\
&& \mu(B(x, \tau)) \geq \frac{2C_0}{n}(d \log n + \log(1/\delta)) \Bigg{\}} 
\end{eqnarray*}
Then, with probability $\geq 1 - 2\delta$ over the training set, Algorithm~\ref{alg:robust_1nn} run with parameters $r$, $\Delta_n$ and $\delta$ has robustness radius at least $r - 2 \tau$ on $X_R$.
\end{theorem}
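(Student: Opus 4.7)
The plan is to prove Theorem~\ref{thm:defense} by combining two uniform concentration events with a geometric argument that exploits $r$-separation of $S'$. On the good events, Algorithm~\ref{alg:confident_label} correctly labels every training point lying in the strict interior, which guarantees a red seed inside $B(x, \tau)$ for each $x \in X_R$; once such a seed lies in $S'$, $r$-separation controls the label of every nearest neighbor of a perturbed test point.

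The first step is to set up the two concentration bounds. By Hoeffding applied to the $k_n$ i.i.d.\ labels aggregated inside Confident-Label at a fixed training point, the choice $k_n = 3\log(2n/\delta)/\Delta_n^2$ gives $|\bar y(x_i) - \E[\bar y(x_i)]| \le \Delta_n$ with probability at least $1 - \delta/n$, and a union bound over $S_n$ makes this simultaneous at every training point with probability $1 - \delta$; whenever $\E[\bar y(x_i)]$ is more than $\Delta_n$ away from $1/2$, Confident-Label returns the correct majority label. Second, a relative VC concentration inequality for Euclidean balls in $\bbR^d$ (VC-dimension $d+1$) gives, with probability $1 - \delta$, that the empirical mass is simultaneously close to $\mu(B)$ for every ball $B$ --- precisely the event for which the $p_n$ inflation factor $1 + C\sqrt{(d\log n + \log(1/\delta))/k_n}$ is calibrated. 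In particular this implies that for every query $q$ the $k_n$-th nearest training neighbor of $q$ lies inside $B(q, r_{p_n}(q))$, transferring the ``interior'' property of $\eta$ to the empirical $k_n$-NN average.

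The second step plants a red seed near every $x \in X_R$. Assume WLOG $x \in \calX^+_{r+\tau,\Delta_n,p_n}$. The density hypothesis $\mu(B(x, \tau)) \ge \tfrac{2C_0}{n}(d\log n + \log(1/\delta))$ and the VC event together give $\Omega(d\log n + \log(1/\delta))$ training points in $B(x, \tau)$; each has $\Pr(y_i = 1) = \eta(x_i) > 1/2 + \Delta_n$, so a Chernoff bound ensures at least one such $x_i$ has $y_i = 1$. For any $x_i \in S_n \cap B^o(x, r+\tau)$, the strict-interior condition forces $\eta > 1/2 + \Delta_n$ on $B(x_i, r_{p_n}(x_i))$, hence $\E[\bar y(x_i)] > 1/2 + \Delta_n$ and $f(x_i) = 1$. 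Applying this to the $x_i \in B(x, \tau)$ identified above gives $f(x_i) = y_i = 1$ and, using $\|x_i - x\| \le \tau$, also $f(x_j) = 1$ for every $x_j \in S_n \cap B(x_i, r) \subseteq S_n \cap B^o(x, r + \tau)$; both red conditions in Algorithm~\ref{alg:robust_1nn} are met, so $x_i \in S_{RED} \subseteq S'$.

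Finally, $r$-separation turns this red seed into robustness. Any training point inside $B(x_i, r)$ with label $0$ would violate $r$-separation with $x_i$ and is excluded from $S'$; hence $S' \cap B(x_i, r) \subseteq \{y = 1\}$, and $B(x_i, r) \supseteq B(x, r - \tau)$ gives $A_1(S', x) = 1$. For a perturbation $x' \in B(x, r - 2\tau)$, the nearest neighbor of $x'$ in $S'$ has distance at most $\|x' - x_i\| \le r - \tau$, and I still need to show it has label $1$. The hard part --- and where I expect the main technical obstacle --- is that a single seed in $B(x, \tau)$ only forces label $1$ on $S' \cap B(x, r - \tau)$, while the nearest neighbor of $x'$ is \emph{a priori} only confined to $B(x, 2r - 3\tau)$ by the triangle inequality. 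To close this gap I would iterate the red-seed construction around every $x' \in B(x, r - 2\tau)$: since $x' \in B^o(x, r + \tau)$ inherits the strict-interior condition from $x$, the same Hoeffding/VC arguments plant a label-$1$ red seed in $B(x', \tau) \cap S'$ (the density at $x'$ coming from the uniform VC event on the interior region), and $r$-separation applied with this new seed excludes any label-$0$ point from the nearest-neighbor ball around $x'$, giving $A_1(S', x') = 1 = A_1(S', x)$. The concentration and red-labeling arguments are routine; the real work is in carrying out this local iteration cleanly.
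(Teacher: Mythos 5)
Your first two steps are, modulo small details, exactly the paper's own proof: your Hoeffding/union-bound event for Confident-Label is Lemma~\ref{lem:markred}, your uniform VC event over balls is Lemma~\ref{lem:densityconc}, the planting of a correctly-labeled training point in $B(x,\tau)$ is Lemma~\ref{lem:existsred}, and the observation that a seed $x_i \in B(x,\tau)$ inherits the interior condition at radius $r$ (because $B^o(x_i,r) \subseteq B^o(x,r+\tau)$) and is therefore marked Red is the same chain the paper uses. One uniformity caveat: your Chernoff argument that some point of $B(x,\tau)$ has label $1$ is pointwise in $x$, whereas the conclusion must hold simultaneously for all $x \in X_R$; this is why the paper phrases Lemma~\ref{lem:existsred} as a uniform-over-all-balls statement ($\hat{J}(B)>0$ whenever $J(B)$ exceeds a threshold) rather than as a per-point Chernoff bound.

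The genuine gap is your last step, and the iteration you propose cannot be carried out. Planting a Red seed near an arbitrary $x' \in B(x, r-2\tau)$ needs two things you do not have. First, mass: the theorem assumes $\mu(B(x,\tau))$ is large only for $x \in X_R$; nothing is assumed about $\mu(B(x',\tau))$, and $x'$ may lie outside $\supp(\mu)$ entirely, so no training point need exist near $x'$. Second, and fatally, redness of a candidate seed $z \in B(x',\tau)$ requires \emph{every} training point in $B(z,r)$ to share $z$'s Confident-Label; but $B(z,r)$ reaches out to distance $\|x-x'\|+\tau+r$ from $x$, which exceeds $r+\tau$ as soon as $x' \neq x$. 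Outside $B^o(x,r+\tau)$ the interior hypothesis says nothing about $\eta$, so a dense, confidently oppositely-labeled cluster sitting just beyond $B^o(x,r+\tau)$ (which is perfectly consistent with $x \in X_R$) vetoes redness of every such $z$; worse, the ``largest $r$-separated subset'' step will then discard the non-Red correctly-labeled points near $x'$ in favor of that denser cluster, so they need not even survive into $S'$. Concretely, in one dimension take $\eta \equiv 1$ and density $\rho_L$ on $[-\tau, r+\tau]$, and $\eta \equiv 0$ with a much larger density $\rho_R$ on $[r+\tau+g,\, r+3\tau+g]$ for a small gap $g$: for large $n$ one checks $x=0 \in X_R$, the Red points are exactly the training points in $[-\tau,\tau+g)$, the maximization step keeps the heavy label-$0$ cluster and drops the non-Red label-$1$ points, and the resulting $1$-NN boundary sits near $\tau+g+r/2$, which is strictly less than $r-2\tau$ when $g,\tau \ll r$.

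What the single-seed argument does deliver: since $S'$ is $r$-separated and contains $x_i$, every point of $S'$ with label $\neq y_i$ is at distance $> r$ from $x_i$, so $1$-NN on $S'$ outputs $y_i$ on all of $B^o(x_i, r/2)$, giving robustness radius at least $r/2-\tau$ at $x$ --- not $r-2\tau$. You should know that the obstacle you ran into is not something the paper resolves: the paper's Lemma~\ref{lem:redtorobustness} asserts radius $r-2\tau$ from a single Red seed in $B^o(x,\tau)$, but its proof only establishes that the prediction is constant on $B(x,\tau)$ (equivalently, on $B^o(x_i,r/2)$), which yields $r/2-\tau$ and no more; the example above indicates the literal $r-2\tau$ bound is false under the stated hypotheses. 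So the defensible statement is the theorem with $r/2-\tau$ in place of $r-2\tau$ (or, equivalently, with the separation parameter fed to Algorithm~\ref{alg:robust_1nn} doubled and the interior radius strengthened accordingly), and your write-up, with the iteration step deleted and the last paragraph replaced by this $r/2-\tau$ argument, proves exactly that.
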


\paragraph{Remarks.} The proof is in the Appendix, along with an analogous result for \raccuracy. 
Observe that $X_R$ is roughly the {\em{high density subset}} of the $r + \tau$-robust strict interior $\calX^+_{r + \tau, \Delta_n, p_n} \cup \calX^-_{r + \tau, \Delta_n, p_n}$. Since $\eta(x)$ is constrained to be greater than $\frac{1}{2} + \Delta_n$ or less than $\frac{1}{2} - \Delta_n$ in this region, as opposed to $0$ or $1$, this is an improvement over standard nearest neighbors when the data distribution has a large high density region that intersects with the interiors. 

A second observation is that as $\tau$ is an arbitrary constant, we can set to it be quite small and still satisfy the condition on $\mu(B(x, \tau))$ for a large fraction of $x$'s when $n$ is very large. This means that in the large sample limit, $r - 2 \tau$ may be close to $r$ and $X_R$ may be close to the high density subset of $\calX^+_{r, \Delta_n, p_n} \cup \calX^-_{r, \Delta_n, p_n}$ for a lot of smooth distributions.

\section{Experiments}
\label{sec:exp}

The results in Section~\ref{sec:defense} assume large sample limits. Thus, a natural question is how well Algorithm~\ref{alg:robust_1nn} performs with more reasonable amounts of training data. We now empirically investigate this question.  

Since there are no general methods that certify robustness at an input, we assess robustness by measuring how our algorithm performs against a suite of standard attack methods. Specifically, we consider the following questions:

\begin{enumerate}
\item 
1. How does our algorithm perform against popular white box and black box attacks compared with standard baselines?
\item 
2. How is performance affected when we change the training set size relative to the data dimension?
\end{enumerate}

These questions are considered in the context of three datasets with varying training set sizes relative to the dimension, as well as two standard white box attacks and black box attacks with two kinds of substitute classifiers.

\begin{figure*}[!t]
\centering
\includegraphics[scale=0.3]{./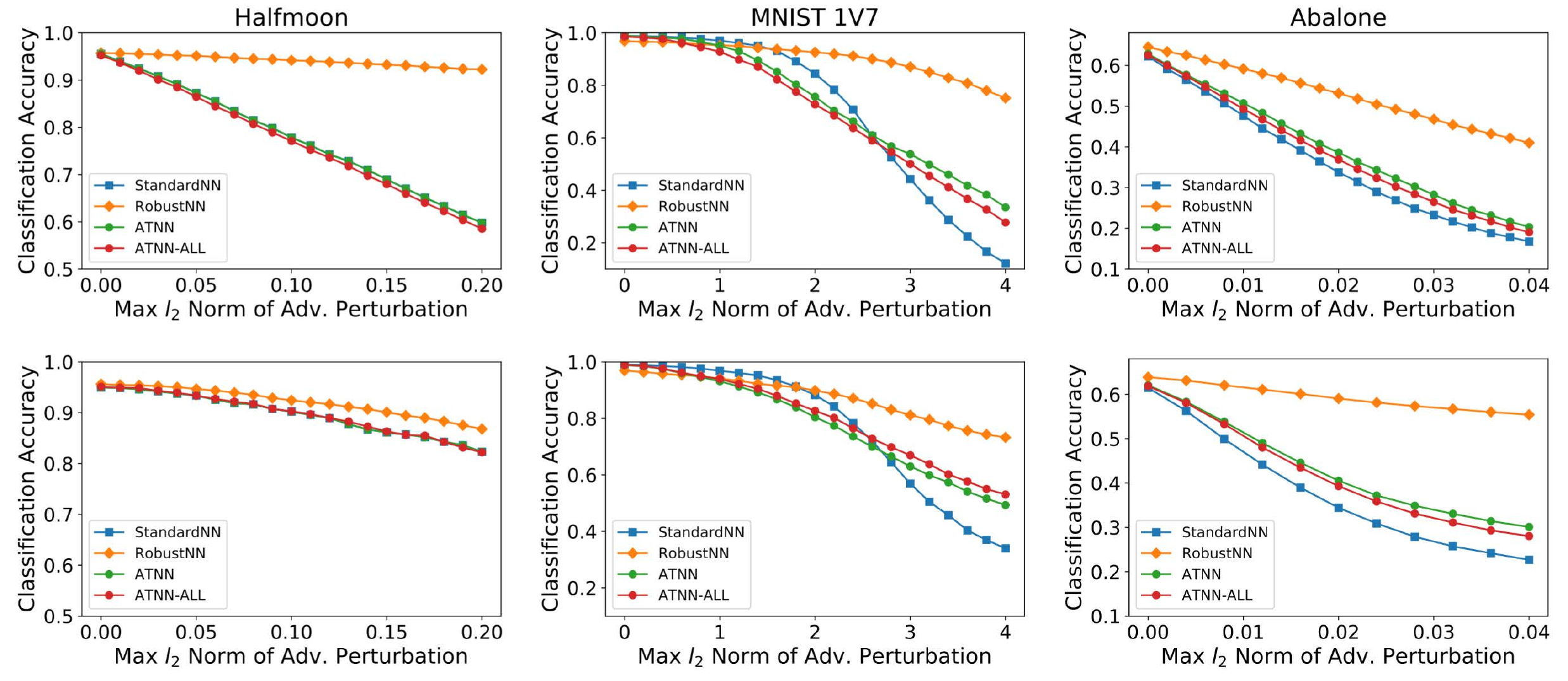}
\caption{{\textbf{White Box Attacks:}} Plot of classification accuracy on adversarial examples v.s. attack radius. \textit{Top row:} Direct Attack. \textit{Bottom row:} Kernel Substitute Attack. \textit{Left to right:} 1) Halfmoon, 2) MNIST 1v 7 and 3) Abalone.}
\label{fig:exp_wb}
\end{figure*}

\begin{figure*}[!t]
\centering
\includegraphics[scale=0.3]{./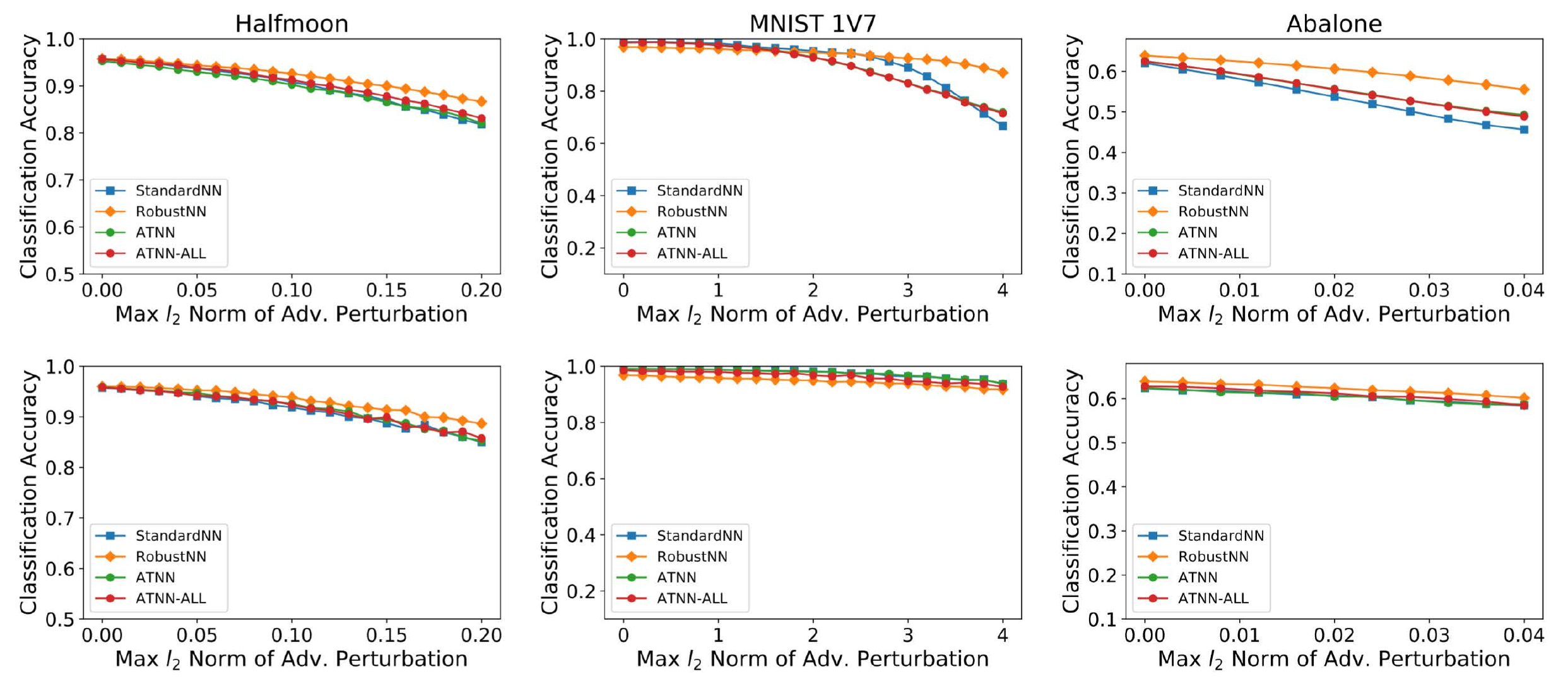}
\caption{{\textbf{Black Box Attacks:}} Plot of classification accuracy on adversarial examples v.s. attack radius. \textit{Top to Bottom:} 1) kernel substitute, 2) neural net substitute. \textit{Left to right:} 1) Halfmoon, 2) MNIST 1 v.s. 7 and 3) Abalone.}
\label{fig:exp_bb}
\end{figure*}

\subsection{Methodology}

\subsubsection{Data}
We use three datasets -- Halfmoon, MNIST 1v7 and Abalone -- with differing data sizes relative to dimension. Halfmoon is a popular $2$-dimensional synthetic data set for non-linear classification. We use a training set of size $2000$ and a test set of size $1000$ generated with standard deviation $\sigma=0.2$. The MNIST 1v7 data set is a subset of the $784$-dimensional MNIST data.  For training, we use $1000$ images each of Digit 1 and 7, and for test, $500$ images of each digit. Finally, for the Abalone dataset~\cite{UCI}, our classification task is to distinguish whether an abalone is older than 12.5 years based on $7$ physical measurements. For training, we use $500$ and for test, $100$ samples. In addition, a validation set with the same size as the test set is generated for each experiment for parameter tuning.

\subsubsection{Baselines}
We compare Algorithm~\ref{alg:robust_1nn}, denoted by RobustNN, against three baselines.  The first is the standard $1$-nearest neighbor algorithm, denoted by StandardNN. We use two forms of adversarially-trained nearest neighbors - ATNN and ATNN-all. Let $S$ be the training set used by standard nearest neighbors. In ATNN, we augment $S$ by creating, for each $(x, y) \in S$, an adversarial example $\adv$ using the attack method in the experiment, and adding $(\adv, y)$. The ATNN classifier is $1$-nearest neighbor on this augmented data. In ATNN-all, for each $(x,y) \in S$, we create adversarial examples using {\em{all}} the attack methods in the experiment, and add them all to $S$. ATNN-all is the nearest neighbor classifier on this augmented data. For example, for white box Direct Attacks in Section~\ref{sec:whitebox}, ATNN includes adversarial examples generated by the Direct Attack, and ATNN-all includes adversarial examples generated by both Direct and Kernel Substitute Attacks.

Observe that all algorithms except StandardNN have parameters to tune. RobustNN has three input parameters -- $\Delta, \delta$ and a defense radius $r$ which is an approximation to the robustness radius. For simplicity, we set $\Delta=0.45$, $\delta=0.1$ and tune $r$ on the validation set; this can be viewed as tuning the parameter $\tau$ in Theorem~\ref{thm:defense}. For ATNN and ATNN-all, the methods that generate the augmenting adversarial examples need a perturbation magnitude $r$; we call this the {\em{defense radius}}. To be fair to all algorithms, we tune the defense radius for each. We consider the adversary with the highest attack perturbation magnitude in the experiment, and select the defense radius that yields the highest validation accuracy against this adversary.

\subsection{White-box Attacks and Results}
\label{sec:whitebox}
To evaluate the robustness of Algorithm~\ref{alg:robust_1nn}, we use two standard classes of attacks -- white box and black box. For white-box attacks, the adversary knows all details about the classifier under attack, including its training data, the training algorithm and any hyperparameters.

\subsubsection{Attack Methods}

We consider two white-box attacks -- direct attack~\cite{NTT} and Kernel Substitute Attack~\cite{papernot2016transferability}.

\noindent\textbf{Direct Attack.} This attack takes as input a test example $x$, an attack radius $r$, and a training dataset $S$ (which may be an augmented or reduced dataset).  It finds an $x' \in S$ that is closest to $x$ but has a different label, and returns the adversarial example $\adv = x + r\frac{x-x'}{||x-x'||_2}$. 

\noindent\textbf{Kernel Substitute Attack.} This method attacks a substitute kernel classifier trained on the {\em{same training set}}. For a test input $\vec{x}$, a set of training points $Z$ with one-hot labels $Y$, a kernel classifier $f$ predicts the class probability as:
$$
f : \vec{x} \to \frac{\left[e^{-||\vec{x} - \vec{z}||_2^2/c}\right]_{\vec{z} \in X}}{\sum_{\vec{z} \in X}e^{-||\vec{x} - \vec{z}||_2^2/c}}\cdot Y
$$
The adversary trains a kernel classifier on the training set of the corresponding nearest neighbors, and then generates adversarial examples against this kernel classifier. The advantage is that the prediction of the kernel classifier is differentiable, which allows the use of standard gradient-based attack methods. For our experiments, we use the popular fast-gradient-sign method (FSGM). The parameter $c$ is tuned to yield the most effective attack, and is set to $0.1$ for Halfmoon and MNIST, and $0.01$ for Abalone.

\subsubsection{Results}

Figure~\ref{fig:exp_wb} shows the results. We see that RobustNN outperforms all baselines for Halfmoon and Abalone for all attack radii. For MNIST, for low attack radii, RobustNN's classification accuracy is slightly lower than the others, while it outperforms the others for large attack radii. Additionally, as is to be expected, the Direct Attack results in lower general accuracy than the Kernel Substitute Attack. 

These results suggest that our algorithm mostly outperforms the baselines StandardNN, ATNN and ATNN-all. As predicted by theory, the performance gain is higher when the training set size is large relative to the dimension -- which is the setting where nearest neighbors work well in general. It has superior performance for Halfmoon and Abalone, where the training set size is large to medium relative to dimension. In contrast, in the sparse dataset MNIST, our algorithm has slightly lower classification accuracy for small attack radii, and higher otherwise.

\subsection{Black-box Attacks and Results}

\cite{papernot2017} has observed that some defense methods that work by masking gradients remain highly amenable to {\em{black box attacks}}. In this attack, the adversary is unaware of the target classifier's nature, parameters or training data, but has access to a seed dataset drawn from the same distribution which they use to train and attack a substitute classifier. To establish robustness properties of Algorithm~\ref{alg:robust_1nn}, we therefore validate it against black box attacks based on two types of substitute classifiers. 

\subsubsection{Attack Methods} 

We use two types of substitute classifiers -- kernel classifiers and neural
networks. The adversary trains the substitute classifier using the method
of~\cite{papernot2017} and uses the adversarial examples against the substitute
to attack the target classifier.

\smallskip\noindent\textbf{Kernel Classifier.} The kernel classifier substitute is the same as the one in Section~\ref{sec:whitebox}, but trained using the seed data and the method of~\cite{papernot2017}. 

\noindent\textbf{Neural Networks.} The neural network for MNIST is the ConvNet in  \cite{papernot2017cleverhans}'s tutorial. For Halfmoon and Abalone, the network is a multi-layer perceptron with $2$ hidden layers.

\noindent\textbf{Procedure.} To train the substitute classifier, the adversary uses the
method of~\cite{papernot2016transferability} to augment the seed data for two
rounds; labels are obtained by querying the target classifier.  Adversarial
examples against the substitutes are created by FGSM,
following~\cite{papernot2016transferability}.  As a sanity check, we verify the
performance of the substitute classifiers on the original training and test
sets. Details are in Table~\ref{table:sub} in the Appendix. Sanity checks on
the performance of the substitute classifiers are presented in
Table~\ref{table:sub} in the Appendix.

\subsubsection{Results}

Figure~\ref{fig:exp_bb} shows the results. For all algorithms, black box attacks are less effective than white box, which corroborates the results
of~\cite{papernot2016transferability}, who observed that black-box attacks are less successful against nearest neighbors. We also find that the kernel substitute attack
is more effective than the neural network substitute, which is
expected as kernel classifiers have similar structure to nearest neighbors. Finally, for Halfmoon and Abalone, our algorithm outperforms the baselines for both attacks; however, for MNIST neural network substitute, our algorithm does not perform as well for small attack radii. This again confirms the theoretical predictions that our algorithm's performance is better when the training set is large relative to the data dimension -- the setting in which nearest neighbors work well in general.

\subsection{Discussion}
The results show that our algorithm performs either better than or about the same as standard baselines against popular white box and black box attacks. As expected from our theoretical results, it performs better for denser datasets which have high or medium amounts of training data relative to the dimension, and either slightly worse or better for sparser datasets, depending on the attack radius. Since non-parametric methods such as nearest neighbors are mostly used for dense data, this suggests that our algorithm has good robustness properties even with reasonable amounts of training data.

\section{Conclusion}
\label{sec:conc}

We introduce a novel theoretical framework for learning robust to adversarial examples, and introduce notions of distributional and finite-sample robustness. We use these notions to analyze a non-parametric classifier, $k$-nearest neighbors, and introduce a novel modified $1$-nearest neighbor algorithm that has good robustness properties in the large sample limit. Experiments show that these properties are still retained for reasonable data sizes. 

Many open questions remain. The first is to close the gap in analysis of $k$-nearest neighbors for $k$ in between our two regimes. The second is to develop nearest neighbor algorithms with better robustness guarantees. Finally, we believe that our work is a first step towards a comprehensive analysis of how the size of training data affects robustness; we believe that an important line of future work is to carry out similar analyses for other supervised learning methods.

\section*{Acknowledgement}
We thank NSF under IIS 1253942 for support. This work was also partially supported by ARO under contract
number W911NF-1-0405. We thank all anonymous reviewers for their constructive comments. 

We also thank Rabanus Derr from University of Tübingen for pointing about a minor mistake in the proof of Lemma~\ref{lem:redtorobustness}. The proof is fixed in the latest version.

\bibliographystyle{plain}
\bibliography{robustnn,somesh,knn}

\clearpage
\appendix
\section{Proofs from Section~\ref{sec:nnanalysis}}

\label{sec:appendixnn}

\subsection{Proofs for Constant $k$}

\begin{proof} (Of Theorem~\ref{thm:knnlowerbound})
To show convergence in probability, we need to show that for all $\epsilon, \delta > 0$, there exists an $n(\epsilon, \delta)$ such that $\Pr( \rho(A_k(S_n, \cdot), x) \geq \epsilon) \leq \delta$ for $n \geq n_0(\epsilon, \delta)$. 

The proof will again proceed in two stages. First, we show in Lemma~\ref{lem:knnnbd} that if the conditions in the statement of Theorem~\ref{thm:knnlowerbound} hold, then there exists some $n(\epsilon, \delta)$ such that for $n \geq n(\epsilon, \delta)$, with probability at least $1 - \delta$, there exists two points $x_+$ and $x_-$ in $B(x, \epsilon)$ such that (a) all $k$ nearest neighbors of $x_+$ have label $1$, (b) all $k$ nearest neighbors of $x_-$ have label $0$, and (c) $x_+ \neq x_-$.

Next we show that if the event stated above happens, then $\rho(A_k(S_n, \cdot), x) \leq \epsilon$. This is because $A_k(S_n, x_+) = 1$ and $A_k(S_n, x_-) = 0$. No matter what $A_k(S_n, x)$ is, we can always find a point $x'$ that lies in $\{x_+, x_-\} \subset B(x, \epsilon)$ such that the prediction at $x'$ is different from $A_k(S_n, x)$. 
\end{proof}

\begin{lemma}
\label{lem:knnnbd}
If the conditions in the statement of Theorem~\ref{thm:knnlowerbound} hold, then there exists some $n(\epsilon, \delta)$ such that for $n \geq n(\epsilon, \delta)$, with probability at least $1 - \delta$, there are two points $x_+$ and $x_-$ in $B(x, \epsilon)$ such that (a) all $k$ nearest neighbors of $x_+$ have label $1$, (b) all $k$ nearest neighbors of $x_-$ have label $0$, and (c) $x_+ \neq x_-$.
\end{lemma}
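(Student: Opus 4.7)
The plan is to exhibit two deterministic candidate points $x_+, x_- \in B(x, \epsilon/2)$ and argue that, with probability at least $1 - \delta$ over the draw of $S_n$, the $k$ nearest neighbors of $x_+$ all carry label $1$ while those of $x_-$ all carry label $0$. Condition (c) is then automatic, since a single point cannot simultaneously have $k$ nearest neighbors that are all-$1$ and all-$0$. The first step is to exploit the regularity hypotheses: by continuity of $\eta$ and $\eta(x) \in (0,1)$, I can pick $r_0 > 0$ and $\eta_0 \in (0,1/2)$ with $\eta(x') \in [\eta_0, 1-\eta_0]$ for all $x' \in B(x, r_0)$. Because $x \in \supp(\mu)$ and $\mu$ is absolutely continuous with respect to Lebesgue measure, small balls near $x$ inside $B(x, r_0)$ can be arranged to have strictly positive $\mu$-mass.

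Next, for an integer $M$ to be fixed below, I would pick $2M$ distinct centers $z_1, \ldots, z_M, z'_1, \ldots, z'_M \in B(x, \min(\epsilon, r_0)/2)$ together with a radius $s > 0$ so small that the closed balls $B(z_i, s)$ and $B(z'_j, s)$ are pairwise disjoint, contained in $B(x, r_0)$, and each has $\mu$-mass at least some common $q > 0$; for any fixed $M$ this is achievable by choosing $s$ small and the centers suitably near $x$. A Chernoff bound on each binomial count $|S_n \cap B(z_i, s)|$ (whose mean is at least $nq$) combined with a union bound yields an $n_0 = n_0(\epsilon, \delta)$ such that for all $n \geq n_0$, with probability at least $1 - \delta/2$ every one of the $2M$ balls contains at least $k$ training points. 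Call this event $E_1$. On $E_1$, the $k$ nearest neighbors of each $z_i$ lie in $B(z_i, s)$ and those of each $z'_j$ lie in $B(z'_j, s)$, and by pairwise disjointness the $2M$ resulting $k$-NN sets are disjoint subsets of $S_n$.

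The final step is to condition on the positions $X_1, \ldots, X_n$ (which determine $E_1$). Given positions, the labels $Y_i$ are independent Bernoullis with parameter $\eta(X_i) \in [\eta_0, 1-\eta_0]$ inside every one of the $2M$ balls. Hence the events $A_i = \{\text{all } k\text{-NN of } z_i \text{ carry label } 1\}$ and $A'_j = \{\text{all } k\text{-NN of } z'_j \text{ carry label } 0\}$ are mutually independent given positions, each of probability at least $p := \eta_0^k > 0$. Choosing $M$ large enough that $2(1-p)^M \leq \delta/2$ makes the probability that some $A_i$ and some $A'_j$ both occur at least $1 - \delta/2$ on $E_1$; combined with $\Pr[E_1] \geq 1 - \delta/2$, this yields the required $1 - \delta$ bound with $x_+ = z_i$ and $x_- = z'_j$. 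The main obstacle is simply getting the bookkeeping right: one must separate the geometric event $E_1$ (which depends only on positions) from label-level arguments, and verify that the geometric disjointness of the $2M$ neighbor sets is precisely what lets the label events factor. The quantifier order is $\epsilon, \delta \to r_0, \eta_0, p \to M \to n_0$, so every auxiliary constant is chosen in an order consistent with what $n_0(\epsilon,\delta)$ is allowed to depend on.
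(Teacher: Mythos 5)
Your proposal is correct and follows essentially the same route as the paper's proof: both construct many disjoint positive-mass balls near $x$ where $\eta$ is bounded away from $0$ and $1$, use a concentration bound plus a union bound to ensure every ball captures at least $k$ training points (so the $k$-NN sets of the centers are disjoint), and then exploit conditional independence of the label events given positions to find one center with all-$1$ neighbors and one with all-$0$ neighbors. The only cosmetic differences are bookkeeping (you use $2M$ dedicated balls and a $\delta/2+\delta/2$ split, the paper reuses the same $m$ balls for both label events with a $\delta/3$ split), and your explicit fixing of $\eta_0$ on a neighborhood up front is a slightly cleaner handling of the quantifier order than the paper's presentation.
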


\begin{proof}(Of Lemma~\ref{lem:knnnbd})
The proof consists of two major components. First, for large enough $n$, with high probability there are many disjoint balls in the neighborhood of $x$ such that each ball contains at least $k$ points in $S_n$. Second, with high probability among these balls, there exists a ball such that the $k$ neareast neighbors of its center all have label $1$. Similarly, there exists a ball such that the $k$ nearest neighbor of its center all have label $0$.

Since $\mu$ is absolutely continuous with respect to Lebesgue measure in the neighborbood of $x$ and $\eta$ is continuous, then for any $m \in \mathbb{Z}+$, we can always find $m$ balls $B(x_1, r_1), \cdots, B(x_m, r_m)$ such that (a) all $m$ balls are disjoint, and (b) for all $i \in \{1, \cdots, m\}$, we have $x_i \in B(x, \epsilon)$, $\mu(B(x_i, r_i)) > 0$ and $\eta(x) \in (0,1)$ for $x \in B(x_i, r_i)$. For simplicity, we use $B_i$ to denote $B(x_i, r_i)$ and $c_i(n)$ to denote the number of points in $B_i \bigcap S_n$. Also, let $\mu_{\min} = \min_{i \in \{1, \cdots, m\}}\mu(B_i)$. Then by Hoeffding's inequality, for each ball $B_i$ and for any $n > \frac{k+1}{\mu_{\min}}$,
$$
\Pr[c_i(n) < k] \leq \exp(-2n\mu_{\min}^2 / (k+1)^2),
$$
where the randomness comes from drawing sample $S_n$.
Then taking the union bound over all $m$ balls, we have
\begin{equation}
\Pr[\exists i \in \{1, \cdots, m\}\ \mbox{such that}\ c_i(n) < k] \leq m\exp(-2n\mu_{\min}^2 / (k+1)^2),
\end{equation}
which implies that when $n > \max \left(\frac{k+1}{\mu_{\min}}, \frac{[\log m - \log(\delta/3)](k+1)^2}{\mu_{\min}^2} \right)$, with probability at least $1-\delta/3$, each of $B_1, \cdots, B_m$ contains at least $k$ points in $S_n$.

An important consequence of the above result is that with probability at least $1-\delta/3$, the set of $k$ nearest neighbors of each center $x_i$ of $B_i$ is completely different from another center $x_j$'s, so the labels of $x_i$'s $k$ nearest neighbors are independent of the labels of $x_j$'s $k$ nearest neighbors.

Now let $\eta_{\min, +} = \min_{x \in B_1\bigcup\cdots\bigcup B_m} \eta(x)$ and $\eta_{\min, -} = \min_{x \in B_1\bigcup\cdots\bigcup B_m} (1-\eta(x))$. Both $\eta_{\min, +}$ and $\eta_{\min, -}$ are greater than 0 by the construction requirements of $B_1, \cdots, B_m$. For any $x_i$,
$$
\Pr[\mbox{$x_i$'s $k$ nearest neighbors all have label $1$}] \geq \eta_{\min, +}^k
$$
Then,
\begin{equation}
\Pr[\mbox{$\exists i\in \{1,\cdots, m\}$} 
\mbox{s.t. $x_i$'s $k$ nearest neighbor all have label $1$}] 
 \geq 
1 - (1-\eta_{\min, +}^k)^m,
\end{equation}
which implies when $m \geq \frac{\log \delta/3}{\log(1 - \eta_{\min, +}^k)}$, with probability at least $1-\delta/3$, there exists an $x_i$ s.t. its $k$ nearest neighbors all have label $1$. This $x_i$ is our $x_+$.

Similarly,
\begin{equation}
\Pr[\mbox{$\exists i\in \{1,\cdots, m\}$}
\mbox{ s.t. $x_i$'s $k$ nearest neighbor all have label $0$}]
\geq 1 - (1-\eta_{\min, -}^k)^m,
\end{equation}
and when $m \geq \frac{\log \delta/3}{\log(1 - \eta_{\min, -}^k)}$, with probability at least $1-\delta/3$, there exists an $x_i$ s.t. its $k$ nearest neighbors all have label $0$. This $x_i$ is our $x_-$.

Combining the results above, we show that for
$$n > \max \left(\frac{k+1}{\mu_{\min}}, \frac{[\log m - \log(\delta/3)](k+1)^2}{\mu_{\min}^2} \right),$$ 
$$m \geq \max \left(\frac{\log \delta/3}{\log(1 - \eta_{\min, +}^k)}, \frac{\log \delta/3}{\log(1 - \eta_{\min, -}^k)} \right),$$ 
with probability at least $1-\delta$, the statement in Lemma~\ref{lem:knnnbd} is satisfied.
\end{proof}

\subsection{Theorem and proof for k-nn robustness lower bound.}
Theorem~\ref{thm:knnlowerbound} shows that k-NN is inherently non-robust in the low $k$ regime if $\eta(x)\in (0,1)$. 
On the contrary, k-NN can be robust at $x$ if $\eta(x)\in \{0,1\}$.
We define the $r$-robust $(p,\Delta)$-interior as follows:
\begin{eqnarray} \label{eqn:defint_eq}
    \hat{\calX}^+_{r, \Delta, p} & = & \{ x \in supp(\mu) | \forall x' \in B^o(x, r), \nonumber \\ && \forall x'' \in B(x', r_p(x')),  \eta(x'') \geq 1/2+\Delta\} \nonumber \\
    \hat{\calX}^-_{r, \Delta, p} & = & \{ x \in supp(\mu) | \forall x' \in B^o(x, r), \nonumber \\ && \forall x'' \in B(x', r_p(x')),  \eta(x'') \leq 1/2-\Delta\} \nonumber
\end{eqnarray}
The definition is similar to the strict $r$-robust $(p,\Delta)$-interior in Section~\ref{sec:defense}, except replacing $<$ and $>$ with $\leq$ and $\geq$.
Theorem~\ref{thm:knnlowerbound01} show that k-NN is robust at radius $r$ in the $r$-robust $(1/2, p)$-interior with high high probability. Corollary~\ref{thm:knnlowerboundrate} shows the finite sample rate of the robustness lowerbound.

\begin{theorem}
\label{thm:knnlowerbound01}
Let $x \in \calX \cap \supp(\mu)$ such that (a) $\mu$ is absolutely
continuous with respect to the Lebesgue measure (b) $\eta(x) \in \{0,
1\}$. Then, for fixed $k$, there exists an $n_0$ such that for $n \geq n_0$, 
$$\Pr[\rho(A_k(S_n, \cdot), x) \geq r] \geq 1-\delta$$
for all $x$ in 
$\hat{\mathcal{X}}^+_{r,1/2,p}\bigcup \hat{\mathcal{X}}^-_{r,1/2,p}$
for all $p>0, \delta>0$.

In addition, with probability at least $1-\delta$,
the astuteness of the k-NN classifier is at least:
$$\mathbb{E}(\mathbf{1}(X\in \hat{\calX}^+_{r,1/2,p}\bigcup \hat{\calX}^-_{r,1/2,p}))$$
\end{theorem}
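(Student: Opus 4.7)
The key observation is that when $\Delta = 1/2$, the defining condition of $\hat{\calX}^+_{r, 1/2, p}$ forces $\eta(x'') \geq 1$, hence $\eta(x'') = 1$, on every ball $B(x', r_p(x'))$ with $x' \in B^o(x, r)$; symmetrically, $\eta \equiv 0$ on the corresponding region for $\hat{\calX}^-_{r, 1/2, p}$. My plan is to use this to reduce the problem to a purely geometric one: every training point landing in any such ball is deterministically labeled $1$ (respectively $0$), so randomness over labels disappears and only randomness in training-point locations remains.

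The remaining task, then, is to show that with probability at least $1 - \delta$ over $S_n$, every Euclidean ball $B \subset \mathbb{R}^d$ with $\mu(B) \geq p$ contains at least $k$ training points. I would prove this through a VC-type uniform deviation bound on the class of closed Euclidean balls (VC dimension $d+1$), giving $\sup_B |\hat{\mu}(B) - \mu(B)| \leq C \sqrt{(d \log n + \log(1/\delta))/n}$ with probability $\geq 1 - \delta$. For $n$ above a threshold $n_0(k, p, d, \delta)$, this deviation drops below $p/2$, so $\hat{\mu}(B) \geq p/2$ and the empirical count in $B$ is at least $np/2 \geq k$. Given this geometric fact, for every $x \in \hat{\calX}^+_{r, 1/2, p}$ and every $x' \in B^o(x, r)$, the ball $B(x', r_p(x'))$ has $\mu$-measure $\geq p$, contains $\geq k$ training points, and those points are all labeled $1$; so the $k$ nearest neighbors of $x'$ all lie in that ball, all carry label $1$, and $A_k(S_n, x') = 1$. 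Since this is true for every $x' \in B^o(x, r)$, the robustness radius of $A_k(S_n, \cdot)$ at $x$ is at least $r$; the $\hat{\calX}^-$ case is symmetric.

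The astuteness bound follows on the same high-probability event: for $X$ drawn from $\mu$ in $\hat{\calX}^+_{r, 1/2, p}$, the condition $\eta(X) = 1$ forces $Y = 1$ almost surely, and the previous step gives $A_k(S_n, X) = 1$ together with $\rho(A_k(S_n, \cdot), X) \geq r$, and symmetrically for $\hat{\calX}^-$; integrating the astuteness indicator under $D$ then yields $\racc_D(A_k(S_n, \cdot), r) \geq \mathbb{E}[\mathbf{1}(X \in \hat{\calX}^+_{r, 1/2, p} \cup \hat{\calX}^-_{r, 1/2, p})]$. The main obstacle I foresee is the uniformity in $x'$: a pointwise Chernoff bound on $\hat{\mu}(B(x', r_p(x')))$ is not enough, because to pin down the robustness radius at a single interior point $x$ we need $A_k(S_n, x') = 1$ at every point of a continuum $B^o(x, r)$, and the astuteness claim additionally demands a single good event covering every $x$ in $\hat{\calX}^+ \cup \hat{\calX}^-$ at once. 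The VC bound over Euclidean balls is exactly what supplies this uniformity.
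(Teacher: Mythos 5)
Your proposal is correct and follows essentially the same route as the paper's proof: the $\Delta = 1/2$ condition makes every training label in the balls $B(x', r_p(x'))$ deterministic, and a uniform-convergence-over-balls argument then guarantees that, for $n$ large enough, every ball of $\mu$-mass at least $p$ contains at least $k$ training points, so the $k$ nearest neighbors of every $x' \in B^o(x,r)$ carry the same label. The only difference is technical rather than conceptual---the paper invokes the relative VC bound of Lemma~16 of \cite{CD10} where you use an additive VC deviation bound over Euclidean balls; both supply exactly the uniformity in $x'$ that you correctly identified as the crux.
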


\begin{proof}
The k-NN classifier $A_k(S_n, \cdot)$ is robust at radius $r$ at $x$
if for every $x'\in B^o(x, r)$, a) there are $k$ training points in 
$B(x',r_p(x'))$, and b) more than $\lfloor k/2 \rfloor$ of them have
the same label as $A_k(S_n, x)$.
Without loss of generality, we look at a point $x \in \hat{\calX}^+_{r,1/2,p}$.
The second condition is satisfied since $\eta(x)=1$ for all training points
in $B(x', r_p(x'))$ 
by the definition of $\hat{\calX}^+_{r,1/2,p}$. 

It remains to check the first condition.
Let $B$ be a ball in $\mathbb{R}^d$ and $n(B)$ be the number of training points in $B$.
Lemma~16 of~\cite{CD10} suggests that with probability at least
$1-\delta$, for all $B$ in $\mathbb{R}^d$,
\begin{equation}
\label{eqn:cd10}
\mu(B)    
\geq \frac{k}{n} + \frac{C_o}{n}
\left(d\log n + \log \frac{1}{\delta} + \sqrt{k\left(d\log n + \log\frac{1}{\delta}\right)}\right)
\end{equation}
implies $n(B) \geq k$,
where $C_o$ is a constant term.
Let $B=B(x', r_p(x'))$. By the definition of $r_p$, $\mu(B)\geq p > 0$.
Then as $n \to \infty$, Inequality~\ref{eqn:cd10} will eventually be satisfied,
which implies $B$ contains at least $k$ training points.
The first condition is then met.

The astuteness result follows because $A_k(S_n, x)=y=1$ in $\hat{\calX}^+_{r,1/2,p}$ 
and $A_k(S_n, x)=y=0$ in $\hat{\calX}^-_{r,1/2,p}$ with probability 1.  
\end{proof}

\begin{corollary}
\label{thm:knnlowerboundrate}
For $n\geq \max(10^4, c_{d,k,\delta}^4/[(k+1)^2p^2])$ where 
$$c_{d,k,\delta}=4(d+1) + \sqrt{16(d+1)^2 + 8(\ln(8/\delta)+k+1)}$$,
with probability at least $1-2\delta$,
$\rho(A_k(S_n, x)) \geq r$ for all $x$ in 
$\hat{\calX}^+_{r,1/2,p}\bigcup \hat{\calX}^-_{r,1/2,p}$\ \ and for all $p>0, \delta>0$. 

In addition, with probability at least $1-2\delta$,
the astuteness of the k-NN classifier is at least:
$$\mathbb{E}(\mathbf{1}(X\in \hat{\calX}^+_{r,1/2,p}\bigcup \hat{\calX}^-_{r,1/2,p}))$$
\end{corollary}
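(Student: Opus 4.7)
The plan is to upgrade Theorem~\ref{thm:knnlowerbound01} from a purely asymptotic statement to an explicit finite-sample rate; the structural argument there carries over verbatim, and only the effective value of $n_0$ needs to be made quantitative. As in that proof, membership of $x$ in $\hat{\calX}^+_{r,1/2,p}\cup\hat{\calX}^-_{r,1/2,p}$ renders the labeling condition for $k$-NN robustness deterministic (every training point in any $B(x',r_p(x'))$ with $x'\in B^o(x,r)$ has the correct label by definition of the interior), so the only random event to control is that each such ball contains at least $k$ training samples. I would invoke Lemma~16 of~\cite{CD10}, restated as Inequality~\eqref{eqn:cd10}, uniformly over balls: with probability at least $1-\delta$, every ball $B\subset\mathbb{R}^d$ whose mass exceeds the prescribed threshold contains $\ge k$ training points. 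Applying it to every $B(x',r_p(x'))$, whose mass is $\ge p$ by definition of $r_p$, reduces the claim to the purely deterministic inequality
\[
np \;\ge\; k + C_o\bigl(d\log n + \log(1/\delta)\bigr) + C_o\sqrt{k\bigl(d\log n + \log(1/\delta)\bigr)}.
\]

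The main obstacle is deriving this inequality from the hypothesis $n \ge \max(10^4, c_{d,k,\delta}^4/[(k+1)^2 p^2])$. My plan is: (i) apply AM--GM, $\sqrt{ku}\le (k+u)/2$ with $u := d\log n + \log(1/\delta)$, to kill the square root, reducing the target to $np \ge (1+C_o/2)k + (3C_o/2)u$; (ii) use $n\ge 10^4$, which implies $\log n\le n^{1/4}$, to bound the right-hand side by a polynomial in $n^{1/4}$; (iii) observe that the hypothesis $n \ge c_{d,k,\delta}^4/[(k+1)^2 p^2]$ is equivalent to $\sqrt{n}(k+1)p \ge c_{d,k,\delta}^2$, hence $np \ge c_{d,k,\delta}^2 \sqrt{n}/(k+1)$; and (iv) substitute $v := n^{1/4}$ so that the resulting sufficient condition becomes a quadratic inequality in $v$ whose positive root, after carefully matching constants with those appearing in CD10's Lemma~16, is exactly $c_{d,k,\delta}^2/\sqrt{(k+1)p}$. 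Equivalently, $c_{d,k,\delta}$ arises as the positive root of $x^2 - 8(d+1)x - 8(\ln(8/\delta)+k+1) = 0$, which explains its specific closed form in the statement. The appearance of $2\delta$ instead of $\delta$ in the failure probability is accounted for by the standard union bound implicit in this matching (with the factor $8$ inside $\ln(8/\delta)$ absorbing the inflation of the confidence budget).

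Finally, the astuteness conclusion is immediate as in Theorem~\ref{thm:knnlowerbound01}: on $\hat{\calX}^+_{r,1/2,p}$ we have $A_k(S_n,x) = 1 = y$ with probability one and symmetrically on $\hat{\calX}^-_{r,1/2,p}$, so the robustness event we have established directly certifies astuteness on this set, and taking expectations over the test point yields the stated lower bound $\mathbb{E}[\mathbf{1}(X\in\hat{\calX}^+_{r,1/2,p}\cup\hat{\calX}^-_{r,1/2,p})]$.
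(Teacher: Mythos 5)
Your high-level structure is the right one and matches the paper's asymptotic Theorem~\ref{thm:knnlowerbound01}: on $\hat{\calX}^+_{r,1/2,p}\cup\hat{\calX}^-_{r,1/2,p}$ the labels inside every $B(x',r_p(x'))$ are deterministic, so robustness reduces to guaranteeing $\geq k$ training points in every such ball, and astuteness then follows for free. However, there is a genuine gap at the step you call ``carefully matching constants with those appearing in CD10's Lemma~16.'' Inequality~\ref{eqn:cd10} involves an \emph{unspecified} absolute constant $C_o$, so no amount of algebra downstream of it can produce the explicit numerals in $c_{d,k,\delta}=4(d+1)+\sqrt{16(d+1)^2+8(\ln(8/\delta)+k+1)}$; if $C_o$ happened to be large, the stated threshold $n\geq c_{d,k,\delta}^4/[(k+1)^2p^2]$ would simply be insufficient under your reduction. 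This is precisely why the paper's proof of the corollary abandons Lemma~16 of~\cite{CD10} and instead uses the explicit VC-based uniform bound of Lemma~\ref{lem:concJn}: since $\eta\equiv 1$ on the relevant balls, $J(B)=\mu(B)\geq p$ and $n\hat{J}(B)$ \emph{is} the number of training points in $B$, so the counting condition becomes $\hat{J}(B)\geq (k+1)/n$, and the explicit quantity $\beta_n=\sqrt{(4/n)((d+1)\ln 2n+\ln(8/\delta))}$ is the sole source of every numeral in $c_{d,k,\delta}$ (including the $8$ inside $\ln(8/\delta)$, a fingerprint of that VC bound) as well as of the requirement $n\geq 10^4$ (so that $\ln(2n)\leq n^{1/4}$).

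Two secondary problems confirm the derivation has not been carried through. First, your reduction $np\geq k+C_o u+C_o\sqrt{ku}$ with $u=d\log n+\log(1/\delta)$ does not become a quadratic in $v=n^{1/4}$: after AM--GM and $\log n\leq n^{1/4}$ it is a quartic $pv^4-(3C_od/2)v-\text{const}\geq 0$, whose solution scales like $1/p$, not like the stated $1/p^2$ threshold; the paper's quadratic $\sqrt{(k+1)p}\,m^2-8(d+1)m-(8\ln(8/\delta)+k+1)\geq 0$ arises only from the different functional form $p-2\beta_n\sqrt{p}-2\beta_n^2\geq (k+1)/n$, and its root is $c_{d,k,\delta}/\sqrt{(k+1)p}$ (your claimed root $c_{d,k,\delta}^2/\sqrt{(k+1)p}$ would give $n\geq c_{d,k,\delta}^8/[(k+1)^2p^2]$, contradicting the statement). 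Second, your accounting of the failure probability is wrong: the $2\delta$ is not ``absorbed'' into $\ln(8/\delta)$ by a union bound in the constant matching; it comes from Lemma~\ref{lem:concJn} itself, which applies Theorem~15 of~\cite{CD10} twice (to the classes $h_B^+$ and $h_B^-$) and hence holds with probability $1-2\delta$. Your route, had it worked, would have given $1-\delta$; the paper's route genuinely needs $1-2\delta$.
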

\begin{proof}
Without loss of generality, we look at a point $x \in \hat{\mathcal{X}}^+_{r,1/2,p}$.
Let $B=B(x', r_p(x'))$, $J(B)=\mathbb{E}(Y\cdot \mathbf{1}(X\in B))$
 and $\hat{J}(B)$ be the empirical estimation of $J(B)$.
Notice that $\hat{J}(B)n$ is the number of training points in $B$, because $\eta(x)=1$ for all $x \in B$ by the definition of $r$-robust $(1/2,p)$-interior.
It remains to find a threshold $n$ such that for all $n'>n$,
\begin{equation}
\hat{J}(B) \geq (k+1)/n'
\end{equation} 
By Lemma~\ref{lem:concJn}, with probability $1-2\delta$, 
\begin{equation}
\hat{J}(B) \geq p-2\beta_n\sqrt{p} - 2\beta^2_n
\end{equation} 
for all $B\in \mathbb{R}^d$.
\end{proof}
Therefore it suffices to find a threshold $n$ that satisfies
\begin{equation}
p - 2\beta_n\sqrt{p} - 2\beta^2_n \geq (k+1)/n,
\end{equation}
where $\beta_n = \sqrt{(4/n)((d+1) \ln 2n + \ln(8/\delta))}$.

Solving this quadratic inequality yields
\begin{equation}
\beta_n \leq \frac{-\sqrt{p}+\sqrt{3p+(k+1)/n}}{2},
\end{equation}
which can be re-written as
\begin{equation}
(8/\sqrt{n})[(d+1)\ln(2n) + \ln(8/\delta) + (k+1)/8] \leq \sqrt{(k+1)p}
\end{equation}
by substituting the expression for $\beta_n$. 
This inequality does not admit an analytic solution.
Nevertheless, we observe that $n^{1/4} \geq \ln(2n)$ for all $n\geq 10^4$.
Therefore it suffices to find an $n\geq 10^4$ such that
\begin{equation}
\label{eqn:complex}
(8/\sqrt{n})[(d+1)n^{1/4} + \ln(8/\delta) + (k+1)/8] \leq \sqrt{(k+1)p}.
\end{equation}
Let $m = n^{1/4}$. Inequality~\ref{eqn:complex} can be re-written as
\begin{equation}
\sqrt{(k+1)p}m^2 - 8(d+1)m - (8\ln(8/\delta)+(k+1)) \geq 0.
\end{equation}

Solving this quadratic inequality with respct to $m$ gives
\begin{equation}
m \geq \frac{4(d+1) + \sqrt{16(d+1)^2 + 8(\ln(8/\delta)+k+1)}}{\sqrt{(k+1)p}}.
\end{equation}

Letting 
$$c_{d,k,\delta} = 4(d+1) + \sqrt{16(d+1)^2 + 8(\ln(8/\delta)+k+1)}$$, 
we find a desired threshold 
\begin{equation}
n = \max(10^4, m^4) \geq \max(10^4, c_{d,k,\delta}^4/[(k+1)^2p^2]).
\end{equation}

The astuteness result follows in a similar way to Theorem~\ref{thm:knnlowerbound01}.

\subsection{Proofs for High $k$}

\subsubsection{Robustness of the Bayes Optimal Classifier}
\begin{proof} (Of Theorem~\ref{thm:bayesoptrobustness})
Suppose $x \in \calX^+_{r, 0, 0}$. Then, $g(x) = 1$. Consider any $x' \in B^o(x, r)$; by definition, $\eta(x') > 1/2$, which implies that $g(x') =1$ as well. Thus, $\rho(g, x) \geq r$. The other case ($x \in \calX^-_{r, 0, 0})$ is symmetric. 

Consider an $x \in \calX^+_{r, 0, 0}$ (the other case is symmetric). We just showed that $g$ has robustness radius $\geq r$ at $x$. Moreover, $p(y=1=g(x)|x) = \eta(x)$; therefore, $g$ predicts the correct label at $x$ with probability $\eta(x)$. The theorem follows by integrating over all $x$ in $\calX^+_{r, 0, 0} \cup \calX^-_{r, 0, 0}$. 
\end{proof}

\subsubsection{Robustness of $k$-Nearest Neighbor}

We begin by stating and proving a more technical version of Theorem~\ref{thm:knnrobustness}. 

\begin{theorem}
For any $n$ and data dimension $d$, define:
\begin{eqnarray*}
a_n & = & \frac{C_0}{n} (d \log n + \log(1/\delta)) \\
b_n & = & C_0 \sqrt{\frac{d \log n + \log(1/\delta)}{n}} \\
\beta_n & = & \sqrt{(4/n)((d+1) \ln 2n + \ln(8/\delta))}
\end{eqnarray*}
where $C_0$ is the constant in Theorem 15 of~\cite{CD10}. Now, pick $k_n$ and $\Delta_n$ so that $\Delta_n \rightarrow 0$ and the following condition is satisfied:
\[ \frac{k_n}{n} \geq \frac{2 \beta_n + b_n + \sqrt{(2 \beta_n + b_n)^2 + 2 \Delta_n (2 \beta_n^2 + a_n)}}{ \Delta_n} \]
and set
\begin{eqnarray*}
p_n & = & \frac{k_n}{n} + \frac{C_0}{n}\Big{(}d \log n + \log(1/\delta) \\
&& + \sqrt{k_n (d \log n + \log(1/\delta)}\Big{)} 
\end{eqnarray*}

Then, with probability $\geq 1 - 3 \delta$, $k_n$-NN has robustness radius $r$ at all $x \in \calX^+_{r, \Delta_n, p_n} \cup \calX^-_{r, \Delta_n, p_n}$. In addition, with probability $\geq 1 - \delta$, the \raccuracy\ of $k_n$-NN is at least:
\[ \E[ \eta(X) \cdot \mone(X \in \calX^+_{r, \Delta_n, p_n})] + \E (1 - \eta(X)) \cdot \mone(X \in \calX^-_{r, \Delta_n, p_n})]\]
\label{thm:knnrobustastute}
\end{theorem}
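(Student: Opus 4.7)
The plan is to reduce the robustness claim at $x$ to showing that for every $x' \in B^o(x, r)$, the $k_n$-NN classifier predicts the Bayes label. Fix $x \in \calX^+_{r, \Delta_n, p_n}$; the $\calX^-$ case is symmetric. To show $\rho(A_{k_n}(S_n, \cdot), x) \geq r$ it suffices to show $A_{k_n}(S_n, x') = 1$ for every $x' \in B^o(x, r)$ simultaneously. This splits into two ingredients: (A) the $k_n$ nearest neighbors of $x'$ all lie inside $B(x', r_{p_n}(x'))$, so by the definition of $\calX^+_{r, \Delta_n, p_n}$ each one has $\eta > 1/2 + \Delta_n$; and (B) the empirical majority of their labels exceeds $k_n/2$.

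For ingredient (A) I would invoke Lemma 16 of~\cite{CD10}, a VC-type uniform bound which, with probability $\geq 1 - \delta$, guarantees that every Euclidean ball $B$ with $\mu(B) \geq p_n$ contains at least $k_n$ training points. The choice of $p_n$ in the statement is exactly the threshold this lemma requires. Since $\mu(B(x', r_{p_n}(x'))) \geq p_n$ by definition of $r_{p_n}$, the $k_n$ nearest neighbors of $x'$ lie inside $B(x', r_{p_n}(x'))$, so each has $\eta > 1/2 + \Delta_n$.

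For ingredient (B), let $B'$ be the smallest ball around $x'$ containing exactly $k_n$ training points, so $n\hat{\mu}(B') = k_n$ and $n\hat{J}(B') = \sum_{i=1}^{k_n} Y^{(i)}(x')$, where $J(B) = \E[Y\mathbf{1}(X \in B)]$. I would apply two standard uniform concentration bounds over the VC class of balls in $\mathbb{R}^d$: a Bernstein-style bound $|\hat{J}(B) - J(B)| \leq 2\beta_n\sqrt{J(B)} + 2\beta_n^2$ (cf.\ Lemma~\ref{lem:concJn}), and the analogous bound $|\hat{\mu}(B) - \mu(B)| \leq b_n\sqrt{\mu(B)} + a_n$ from Theorem 15 of~\cite{CD10}. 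Since $B' \subseteq B(x', r_{p_n}(x'))$, we may lower-bound $J(B') \geq (1/2 + \Delta_n)\mu(B')$; combining this with the $\mu$-concentration bound (which converts the empirical count $k_n/n$ into an upper bound on $\mu(B')$) and the $J$-concentration bound, the desired inequality $\hat{J}(B') > k_n/(2n)$ reduces to a quadratic inequality in $k_n/n$ whose coefficients involve $\beta_n, a_n, b_n, \Delta_n$. Solving that quadratic reproduces exactly the lower bound on $k_n/n$ stated in the theorem. The main obstacle is carrying out this algebraic chase cleanly so that the various slack terms assemble into the stated closed form; once the two $J$-events and the one $\mu$-event are controlled, a union bound yields the $1 - 3\delta$ confidence.

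The astuteness claim then follows with essentially no extra work. On the good event, for $X \in \calX^+_{r, \Delta_n, p_n}$ the classifier outputs $1$ at $X$ with robustness radius $\geq r$, and the true label equals $1$ with conditional probability $\eta(X)$ given $X$; integrating over $X$ contributes $\E[\eta(X)\mathbf{1}(X \in \calX^+_{r, \Delta_n, p_n})]$ to the astuteness, and the symmetric contribution from $\calX^-_{r, \Delta_n, p_n}$ completes the lower bound.
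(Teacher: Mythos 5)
Your proposal is correct and follows essentially the same route as the paper's proof: your ingredient (A) is the paper's Lemma~\ref{lem:densityconc} (via Lemma 16 of~\cite{CD10}), your $J$-concentration is the paper's Lemma~\ref{lem:concJn}, the quadratic inequality in $k_n/n$ is exactly how the stated threshold on $k_n/n$ arises, and the $1-3\delta$ union bound and the astuteness-by-integration step match as well. One small correction to a parenthetical: since the aim is to lower-bound $J(B') \geq (\tfrac{1}{2}+\Delta_n)\mu(B')$, the $\mu$-concentration must turn the empirical count $\hat{\mu}(B') = k_n/n$ into a \emph{lower} bound $\mu(B') \geq k_n/n - \delta_p$ (not an upper bound), while the error term $2\beta_n \min(\sqrt{J},\sqrt{\hat{J}})$ is controlled, as in the paper, by $\min(\sqrt{J},\sqrt{\hat{J}}) \leq \sqrt{\hat{J}(B')} \leq \sqrt{k_n/n}$.
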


Before we prove Theorem~\ref{thm:knnrobustastute}, we need some definitions and lemmas. 

For any Euclidean ball $B$ in $\mathbb{R}^d$, define $J(B) = \mathbb{E}[Y \cdot \mone (X \in B)]$ and $\hat{J}(B)$ as the corresponding empirical quantity.

\begin{lemma}\label{lem:concJn}
With probability $\geq 1 - 2\delta$, for all balls $B$ in $\mathbb{R}^d$, we have:
\[ |J(B) - \hat{J}(B)| \leq 2\beta_n^2 + 2\beta_n \min(\sqrt{J(B)}, \sqrt{\hat{J}(B)}),\]
where $\beta_n = \sqrt{(4/n)((d+1) \ln 2n + \ln(8/\delta))}$.
\end{lemma}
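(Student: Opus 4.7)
The plan is to recognize this as a uniform convergence statement for a $[0,1]$-valued empirical process and reduce it, via a small trick, to the standard relative Vapnik--Chervonenkis inequality over a VC class. For each closed ball $B \subset \mathbb{R}^d$, introduce the set $\tilde B = B \times \{1\} \subset \mathbb{R}^d \times \{0,1\}$. Since $Y \in \{0,1\}$, we have $Y \cdot \mathbf{1}(X \in B) = \mathbf{1}((X,Y) \in \tilde B)$, so $J(B) = \Pr[(X,Y) \in \tilde B]$ and $\hat J(B)$ is its empirical counterpart. The family $\mathcal{A} = \{\tilde B\}$ inherits its shatter function from the class of closed balls in $\mathbb{R}^d$, which has VC dimension $d+1$; its growth function is therefore at most $(2n)^{d+1}$.

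Next I would invoke the one-sided relative VC inequality (e.g.\ Theorem 5.7 of Vapnik or the standard statement in Devroye--Gy\"orfi--Lugosi), which gives: for any VC class with shatter function $S(\cdot)$, with probability at least $1-\delta'$, simultaneously over all sets $A$ in the class,
\[
P(A) - P_n(A) \;\leq\; 2\sqrt{\tfrac{\log S(2n) + \log(4/\delta')}{n}}\cdot\sqrt{P(A)}.
\]
Plugging in $S(2n) \leq (2n)^{d+1}$ and $\delta' = \delta$, the right-hand coefficient is exactly $\beta_n = \sqrt{(4/n)((d+1)\ln 2n + \ln(8/\delta))}$, with an extra $\log 2$ slack absorbed into $\log(8/\delta)$. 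Applying this bound once for $(J - \hat J)$ and once for $(\hat J - J)$ and union-bounding yields, with probability $\geq 1 - 2\delta$,
\[
|J(B) - \hat J(B)| \;\leq\; \beta_n \sqrt{J(B)} \quad \text{and} \quad |J(B) - \hat J(B)| \;\leq\; \beta_n \sqrt{\hat J(B)}
\]
uniformly over balls $B$.

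The last step is a routine quadratic manipulation to convert between these two forms. From $J(B) - \hat J(B) \leq \beta_n \sqrt{J(B)}$, viewing $\sqrt{J(B)}$ as a free variable, we get $\sqrt{J(B)} \leq \beta_n + \sqrt{\hat J(B) + \beta_n^2} \leq 2\beta_n + \sqrt{\hat J(B)}$, hence $J(B) - \hat J(B) \leq 2\beta_n \sqrt{\hat J(B)} + 2\beta_n^2$. A symmetric manipulation starting from $\hat J(B) - J(B) \leq \beta_n \sqrt{\hat J(B)}$ gives the same bound with $\sqrt{J(B)}$ in place of $\sqrt{\hat J(B)}$. Taking the minimum of the two right-hand sides gives the claimed inequality $|J(B)-\hat J(B)| \leq 2\beta_n^2 + 2\beta_n \min(\sqrt{J(B)}, \sqrt{\hat J(B)})$.

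The main obstacle is essentially bookkeeping rather than conceptual: one must choose a version of the relative VC inequality whose constants line up with the specific form of $\beta_n$ stated. Since the previous proof (Corollary~\ref{thm:knnlowerboundrate}) already cites Lemma~16 of Chaudhuri--Dasgupta, which is formulated in precisely this language, the cleanest route is to cite that lemma directly; otherwise one reproduces the calculation above, carefully tracking the $\ln 2n$ vs.\ $\ln n$ and $\ln(8/\delta)$ vs.\ $\ln(4/\delta)$ factors so the resulting envelope is exactly $2\beta_n^2 + 2\beta_n \sqrt{\,\cdot\,}$.
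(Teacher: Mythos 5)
Your proposal is correct and takes essentially the same route as the paper: the paper likewise reduces $J(B)$ to expectations of $\{0,1\}$-valued indicator classes over balls with VC dimension at most $d+1$ (its Lemma~\ref{lem:vcj}) and then cites Theorem 15 of~\cite{CD10}, which is precisely the relative VC inequality in the stated $\beta_n$ form that you re-derive by hand, with the $1-2\delta$ probability coming from a union bound in both arguments. The only differences are bookkeeping: the paper splits the label-restricted indicators into two classes $h_B^{\pm}$ and union-bounds over the classes, whereas you use the single set class $B \times \{1\}$ and union-bound over the two deviation directions, spelling out the quadratic manipulation that the cited theorem packages.
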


\begin{proof}(Of Lemma~\ref{lem:concJn})
Consider the two functions: $h_B^+(x, y) = \mone(y = 1, x \in B)$ and $h_B^-(x, y) = \mone(y = -1, x \in B)$. From Lemma~\ref{lem:vcj}, both $h_B^+$ and $h_B^-$ are $0/1$ functions with VC dimension at most $d+1$. Additionally, $J(B) = \E[h_B^+] - \E[h_B^-]$. Applying Theorem 15 of~\cite{CD10}, along with an union bound gives the lemma.
\end{proof}

\begin{lemma}\label{lem:vcj}
For an Euclidean ball $B$ in $\bbR^d$, define the function $h_B^+: \bbR^d \times \{-1, +1\} \rightarrow \{ 0, 1 \}$ as:
\[ h^+_B(x, y) = 1(y = 1, x \in B) \]
and let $\calH_B = \{ h^+_B \}$ be the class of all such functions. Then the VC-dimension of $\calH_B$ is at most $d+1$.
\end{lemma}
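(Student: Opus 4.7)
The plan is to reduce the problem to the classical VC dimension bound for the class of Euclidean balls in $\bbR^d$. The key observation is that the extra $y$-argument in $h^+_B$ only acts as a filter: points with $y=-1$ are always labeled $0$, and on points with $y=+1$ the function reduces to plain ball-membership. So shattering by $\calH_B$ is essentially shattering by balls, with an added constraint that all shattered points must lie in the ``$y=+1$ layer.''

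First I would argue that any set $S = \{(x_1, y_1), \ldots, (x_m, y_m)\}$ shattered by $\calH_B$ must have $y_i = +1$ for every $i$. Indeed, if some $y_i = -1$, then $h^+_B(x_i, y_i) = 0$ for every $B$, so the labeling of $S$ that assigns $1$ to this point is not realizable by any $h^+_B \in \calH_B$, contradicting shattering. Therefore the maximum shattered set size over $\bbR^d \times \{-1,+1\}$ equals the maximum shattered set size among subsets of $\bbR^d \times \{+1\}$.

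Next I would rewrite the problem on such a set: restricted to points of the form $(x_i, +1)$, we have $h^+_B(x_i, +1) = \mone(x_i \in B)$. Hence the projection of $\calH_B$ onto $\{(x_1,+1), \ldots, (x_m,+1)\}$ is in bijection with the projection of the class $\calB = \{\mone(\cdot \in B) : B \text{ a Euclidean ball in } \bbR^d\}$ onto $\{x_1, \ldots, x_m\}$. Consequently $\text{VCdim}(\calH_B) \leq \text{VCdim}(\calB)$.

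Finally I would invoke the classical fact that $\text{VCdim}(\calB) = d+1$. The standard route is the lifting $\phi: \bbR^d \to \bbR^{d+1}$ defined by $\phi(x) = (x, \|x\|^2)$; the condition $x \in B(c, r)$ rewrites as an affine half-space condition in $\bbR^{d+1}$ in which the coefficient of the last coordinate is fixed, and the VC dimension bound for affine half-spaces (shaved by one for the fixed coefficient) gives the $d+1$ bound. Combining the three steps yields $\text{VCdim}(\calH_B) \leq d+1$, as claimed. The only nontrivial piece is the last step, which is textbook; the rest is a short reduction, so no real obstacle is anticipated.
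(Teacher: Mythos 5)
Your proof is correct and takes essentially the same route as the paper's: reduce shattering by $\calH_B$ to shattering by Euclidean balls in $\bbR^d$, then invoke the classical fact that balls have VC dimension $d+1$. If anything, your write-up is more careful than the paper's, since you explicitly dispose of points with $y=-1$ (which no $h^+_B$ can ever label $1$) and note that shattered sets must therefore lie in the $y=+1$ layer, a step the paper leaves implicit.
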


\begin{proof}(Of Lemma~\ref{lem:vcj})
Let $U$ be a set of $d+2$ points in $\bbR^d$; as the VC dimension of balls in $\bbR^d$ is $d+1$, $U$ cannot be shattered by balls in $\bbR^d$. Let $U_L = \{ (x, y) | x \in U \}$ be a labeling of $U$ that cannot be achieved by any ball (with pluses inside and minuses outside); the corresponding $d+1$-dimensional points cannot be labeled accordingly by $h_B^+$. Since $U$ is an arbitrary set of $d+2$ points, this implies that any set of $d+2$ points in $\bbR^d \times \{-1, +1\}$ cannot be shattered by $\calH_B$. The lemma follows. 
\end{proof}

\begin{lemma} \label{lem:densityconc}
Let $\delta_p = \frac{C_0}{n}\left(d \log n + \log(1/\delta) + \sqrt{k (d \log n + \log(1/\delta)}\right)$. Then, with probability $\geq 1 - \delta$, for all $x$, $\| x - X_{(k+1)}(x)\| \leq r_{k/n + \delta_p}(x)$, and $\mu(B(x, \|x - X_{(k+1)}(x)\|)) \geq \frac{k}{n} - \delta_p$.
\end{lemma}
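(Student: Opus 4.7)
The plan is to obtain Lemma~\ref{lem:densityconc} as a direct corollary of the relative VC concentration inequality for Euclidean balls that has already been invoked earlier in the appendix (Theorem~15 and Lemma~16 of~\cite{CD10}, and also Lemma~\ref{lem:concJn} above). Since the collection of closed balls in $\bbR^d$ has VC-dimension $d+1$, those results supply a single event $E$ of probability at least $1-\delta$ on which both of the following hold simultaneously over \emph{every} closed ball $B \subset \bbR^d$: (i) $\mu(B) \geq k/n + \delta_p$ implies $n\,\mu_n(B) \geq k+1$, and (ii) $n\,\mu_n(B) \geq k+1$ implies $\mu(B) \geq k/n - \delta_p$, where $\delta_p$ is precisely the quantity defined in the statement. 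The entire proof will be carried out on this event.

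For the first claim, fix $x$ and set $p = k/n + \delta_p$. By definition of the probability radius, $\mu(B(x, r_p(x))) \geq p$; applying implication (i) to $B = B(x, r_p(x))$ shows that this ball contains at least $k+1$ training points, hence $X_{(k+1)}(x) \in B(x, r_p(x))$ and $\|x - X_{(k+1)}(x)\| \leq r_{k/n + \delta_p}(x)$ uniformly in $x$.

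For the second claim, let $B^\star = B(x, \|x - X_{(k+1)}(x)\|)$. By construction this closed ball contains the first $k+1$ nearest neighbors of $x$ in the sample, so $n\,\mu_n(B^\star) \geq k+1$. Direction (ii) then yields $\mu(B^\star) \geq k/n - \delta_p$, which is the second inequality.

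The main obstacle is purely bookkeeping: we need a single invocation of Theorem~15 of~\cite{CD10} (or two one-sided invocations combined by a union bound) to produce the exact expression for $\delta_p$ in both directions, which amounts to choosing and re-labeling the constant $C_0$ carefully and tracking the $k$-vs-$(k+1)$ offset that arises from whether $X_{(k+1)}(x)$ is counted as inside the closed ball. There is no new probabilistic content beyond what has already been deployed in the proofs of Corollary~\ref{thm:knnlowerboundrate} and Lemma~\ref{lem:concJn}.
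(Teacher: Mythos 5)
Your proposal takes essentially the same route as the paper: the paper's proof likewise consists of observing that the probability radius is attained (i.e., $\mu(B(x, r_p(x))) \geq p$, by continuity from above for closed balls) and then invoking the uniform ball concentration of Lemma~16 of \cite{CD10}, which is precisely your pair of implications (i) and (ii) applied to $B(x, r_p(x))$ and to $B(x, \|x - X_{(k+1)}(x)\|)$. The paper leaves the $k$-versus-$(k+1)$ offset and the constant bookkeeping entirely implicit, so your write-up is, if anything, more explicit than the original.
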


\begin{proof}(Of Lemma~\ref{lem:densityconc})
Observe that by definition for any $x$, $r_p$ is the smallest $r$ such that $\mu(B(x, r_p(x)) \geq p$. The rest of the proof follows from Lemma 16 of~\cite{CD10}. 
\end{proof}

\begin{proof}(Of Theorem~\ref{thm:knnrobustastute})

From Lemma~\ref{lem:densityconc}, by uniform convergence of $\hat{\mu}$, with probability $\geq 1 - \delta$, for all $x'$, $\| x' - X^{(k_n)}(x') \| \leq r_{p_n}(x')$ and $\mu(B(x, \|x - X^{(k_n)}(x)\|)) \geq \frac{k_n}{n} - \delta_p$. If $x' \in \calX^{+}_{r, \Delta_n, p_n}$, this implies that for all $\tilde{x} \in B(x', X^{(k_n)}(x'))$, $\eta(\tilde{x}) \geq 1/2 + \Delta$. Therefore, for such an $x'$, $J(B(x', X^{(k_n)}(x'))) \geq (\frac{1}{2} + \Delta_n)\mu(B(x', X^{(k_n)}(x'))) \geq (\frac{1}{2} + \Delta_n) (k_n/n - \delta_p)$. Since for $B(x', X^{(k_n)}(x'))$, $\hat{\mu}(B(x', X^{(k_n)}(x'))) = \frac{k_n}{n}$, $\min(\hat{J}, J) \leq \frac{k}{n}$. Thus we can apply Lemma~\ref{lem:concJn} to conclude that
\[ \hat{J}(B) > J(B) - 2 \beta_n^2 - 2 \beta_n \sqrt{k_n/n} > \frac{k_n}{2n},\]
which implies that $\hat{Y}(B) = \frac{1}{k_n} \sum_{i=1}^{k_n} Y^{(i)}(x) = \frac{n}{k_n}\hat{J}(B) > \frac{1}{2}$. The first part of the theorem follows.

For the second part, observe that for an $x \in \calX^{+}_{r, \Delta_n, p_n}$, the label $Y$ is equal to $+1$ with probability $\eta(x)$ and for an $x \in \calX^{-}_{r, \Delta_n, p_n}$, the label $Y$ is equal to $-1$ with probability $1 - \eta(x)$. Combining this with the first part completes the proof.
\end{proof}

\begin{figure*}[!t]
\includegraphics[scale = 0.28]{./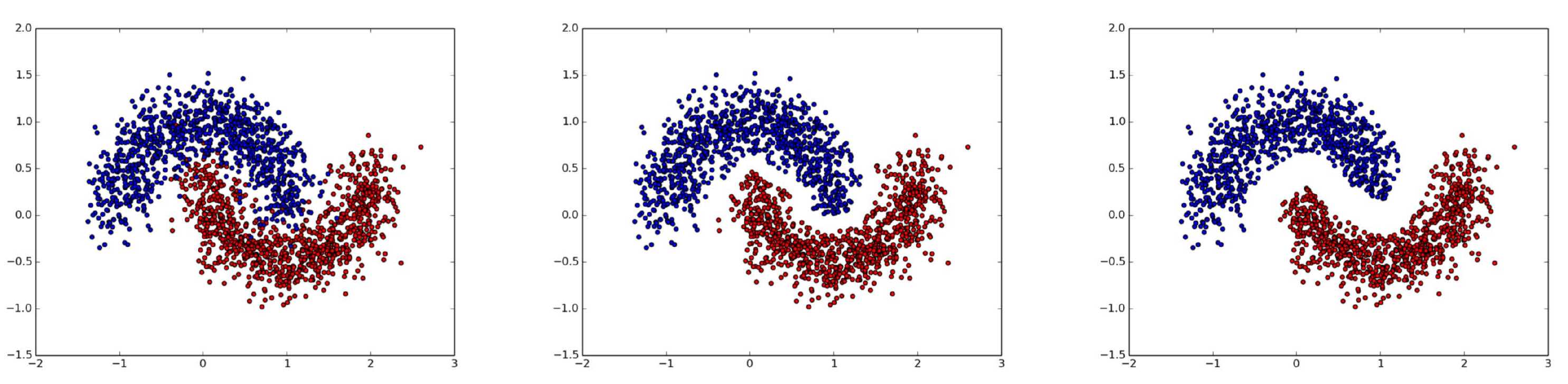}
\caption{Visualization of the halfmoon dataset. 1) Training sample of size $n = 2000$, 2) subset selected by Robust\_1NN with defense radius $r = 0.1$, 3) subset selected by Robust\_1NN with defense radius $r = 0.2$.}
\label{fig:halfmoon}
\end{figure*}

\begin{figure*}[!t]
\centering
\includegraphics[scale=0.28]{./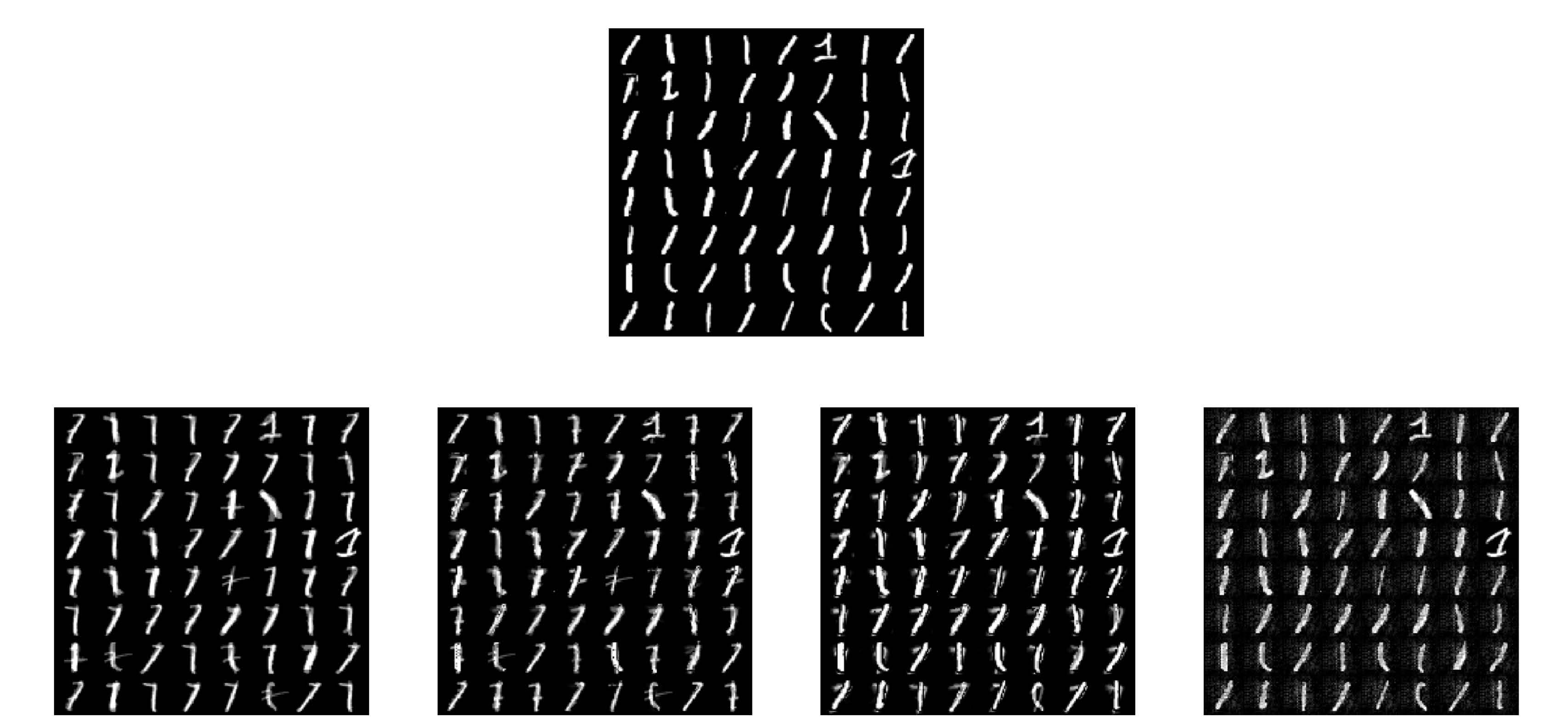}
\caption{Adversarial examples of MNIST digit 1 images created by different attack methods. \textit{Top row:} clean digit 1 test images. \textit{Middle row from left to right:} 1) direct attack, 2) white-box kernel attack. \textit{Bottom row from left to right:} 1) black-box kernel attack, 2) black-box neural net substitute attack.}
\label{fig:adv}
\end{figure*}

\begin{center}
\begin{table*}[!t]
\centering
\setlength\tabcolsep{1.5pt}
\small
\caption{An evaluation of the black-box substitute classifier. Each black-box substitute is evaluated by: 1) its accuracy on the its training set, 2) its accuracy on the test set, and 3) the percentage of predictions agreeing with the target classifier on the test set. A combination of high test accuracy and consistency with the original classifier indicates the black-box model emulates the target classifier well.}
\vspace{10pt}
\label{table:sub}
\scalebox{1}{
\begin{tabular}{ c | c | c | c | c}
  \multicolumn{5}{c}{Abalone}\\
  \hline
  & target $f$& \% training & \% test& \% test$f$\\
  & & accuracy & accuracy & same as $f$\\
 \hline
 \multirow{4}{4em}{Kernel} &  StandardNN & 100\% & 61.3\% &72.6\%\\
  & RobustNN & 100\% &62.5\% & 90.9\%\\  
 & ATNN & 100\% & 61.4\%  &  73.7\%\\
 & ATNN-All & 100\% & 63.5\%  &  73.5\%\\
 \hline
 \multirow{4}{4em}{Neural Nets} & StandardNN & 69.1\% & 68.9\% & 68.6\% \\
 & RobustNN & 87.2\% & 64.1\% &  86.9\%\\
 & ATNN & 68.8\% & 68.4\% &
 68.4\%\\
 & ATNN-All & 66.5\% & 65.0\%  &  66.6\%\\
 \hline
\end{tabular}
}
\scalebox{1}{
\begin{tabular}{ c | c | c | c | c}
  \multicolumn{5}{c}{Halfmoon}\\
  \hline
  & target $f$& \% training & \% test& \% test\\
  & & accuracy & accuracy & same as $f$\\
 \hline
 \multirow{4}{4em}{Kernel} &  StandardNN & 95.9\% & 95.6\% &95.5\%\\
  & RobustNN & 97.7\% & 94.9\% & 97.6\%\\  
 & ATNN & 96.4\% & 95.1\%  &  96.0\%\\
& ATNN-All & 97.6\% & 96.8\%  &  97.3\%\\
 \hline
 \multirow{4}{4em}{Neural Nets} & StandardNN & 94.5\% & 94.0\% & 94.4\% \\
 & RobustNN & 94.2\% & 90.5\% &  94.1\%\\
 & ATNN & 95.3\% & 94.2\% &
 95.2\%\\
& ATNN-All & 96.9\% & 96.2\%  &  96.5\%\\
 \hline
\end{tabular}
}

\vspace{10pt}
\scalebox{1}{
\begin{tabular}{ c | c | c | c | c}
  \multicolumn{5}{c}{MNIST 1v7}\\
  \hline
  & target $f$& \% training& \% test& \% test\\
  & & accuracy & accuracy & same as $f$\\
 \hline
 \multirow{4}{4em}{Kernel} &  StandardNN & 100\% & 98.9\% &99.3\%\\
  & RobustNN & 100\% & 95.4\% & 97.6\%\\  
 & ATNN & 100\% & 98.9\%  &  99.3\%\\
& ATNN-All & 100\% & 98.7\%  &  99.3\%\\
 \hline
 \multirow{4}{4em}{Neural Nets} & StandardNN & 99.9\% & 98.9\% & 99.1\% \\
 & RobustNN & 99.8\% & 94.8\% &  98.7\%\\
 & ATNN & 100\% & 98.8\% &
 99.2\%\\
& ATNN-All & 99.7\% & 98.9\%  &  99.3\%\\
 \hline
\end{tabular}
}
\end{table*}
\end{center}

\section{Proofs from Section~\ref{sec:defense}}

We begin with a statement of Chernoff Bounds that we use in our calculations.

\begin{theorem}~\cite{UpfalMitzenmacher00}
Let $X_i$ be a $0/1$ random variable and let $X = \frac{1}{m}\sum_{i=1}^{m} X_i$. Then,
\[ \Pr( | X - \E[X]| \geq \delta) \leq e^{-m \delta^2/2} + e^{-m \delta^2/3} \leq 2 e^{-m \delta^2/3}\]
\label{thm:chernoff}
\end{theorem}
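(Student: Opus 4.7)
The plan is to prove this via the standard Chernoff-bound technique applied to the moment generating function (MGF), which implicitly requires the $X_i$ to be mutually independent (an assumption the statement leaves tacit). I would split the symmetric two-sided event into its upper and lower tails, $\{X - \E[X] \geq \delta\}$ and $\{\E[X] - X \geq \delta\}$, and handle each separately. The two exponentials on the right-hand side, with distinct constants $1/2$ and $1/3$, correspond exactly to these two tails, and a union bound at the end combines them into the stated inequality.

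For the upper tail, I would apply Markov's inequality to the positive random variable $e^{\lambda X}$ with parameter $\lambda > 0$, giving $\Pr(X \geq \E[X] + \delta) \leq e^{-\lambda(\E[X] + \delta)} \E[e^{\lambda X}]$. Independence lets me factor the MGF as $\E[e^{\lambda X}] = \prod_{i=1}^{m} \E[e^{\lambda X_i}]$, and for each $0/1$ variable $X_i$ with $p_i = \Pr(X_i = 1)$ the MGF equals $1 + p_i(e^\lambda - 1)$, which I would upper bound by $\exp(p_i(e^\lambda - 1))$ via the elementary inequality $1 + t \leq e^t$. Multiplying across $i$ yields $\E[e^{\lambda X}] \leq \exp(\mu(e^\lambda - 1))$ where $\mu = \E[X]$. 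Optimizing by choosing $\lambda = \ln(1 + \delta/\mu)$ gives the classical form $\Pr(X \geq \mu + \delta) \leq \bigl(e^{\delta/\mu}/(1 + \delta/\mu)^{1+\delta/\mu}\bigr)^{\mu}$.

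To extract the clean exponential $e^{-m\delta^2/3}$, I would then invoke the analytic inequality $(1+t)\ln(1+t) - t \geq t^2/3$ valid for $t \in [0,1]$, applied with $t = \delta/\mu$. The lower tail is handled by the mirror argument: apply Markov to $e^{-\lambda X}$ with $\lambda > 0$, bound the MGF in the same way, and then invoke the sharper inequality $(1-t)\ln(1-t) + t \geq t^2/2$, which is what delivers the tighter constant $1/2$ in that exponent. I expect the main obstacle to be exactly these two analytic steps --- deriving the sharp constants $1/2$ and $1/3$ from the raw $\exp(\mu(e^\lambda - 1))$ MGF bound --- since everything else (Markov, independence, the MGF factorization, the $1+t \leq e^t$ step) is essentially mechanical. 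The final looser bound $2 e^{-m\delta^2/3}$ follows immediately from monotonicity of the exponential, since $e^{-m\delta^2/2} \leq e^{-m\delta^2/3}$.
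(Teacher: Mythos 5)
The paper itself offers no proof of this statement --- it is quoted directly from the cited Mitzenmacher--Upfal textbook --- so the only meaningful benchmark is the standard textbook argument, and that is exactly the route you take: split into two tails, apply Markov to $e^{\pm\lambda X}$, factor the moment generating function using independence (which you correctly flag as a tacit assumption), bound each factor via $1+t\le e^{t}$, optimize $\lambda$, and finish with the quadratic lower bounds on the rate functions, with the constant $1/2$ on the lower tail and $1/3$ on the upper tail. All of that is the right machinery, correctly assigned.

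There is, however, a genuine gap at the final step. With your reading of $\delta$ as the absolute deviation of the \emph{sum} $X$, and $t=\delta/\mu$, the bound your argument produces is $\exp\left(-\mu\left[(1+t)\ln(1+t)-t\right]\right)\le \exp\left(-\mu t^{2}/3\right)=\exp\left(-\delta^{2}/(3\mu)\right)$, which is \emph{not} the claimed $\exp(-m\delta^{2}/3)$: since $\mu\le m$, your exponent is weaker than the claimed one whenever $\mu>1/m$, and no additional argument can close this because under that literal reading the stated inequality is false --- for $X_{i}$ i.i.d.\ Bernoulli$(1/2)$ and $\delta=\sqrt{m}$, the left side stays bounded below by a positive constant while $2e^{-m\delta^{2}/3}=2e^{-m^{2}/3}\to 0$. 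The statement is true only under the reading the paper actually relies on in Lemma~\ref{lem:markred}, namely that $\delta$ is the deviation of the empirical mean $X/m$, i.e.\ a sum deviation of $m\delta$. With that reading your machinery does go through: take $t=m\delta/\mu$, obtain $\exp(-m^{2}\delta^{2}/(3\mu))$ from the quadratic rate bound, and conclude using $\mu\le m$; you must also split off the case $t>1$, where $(1+t)\ln(1+t)-t\ge t^{2}/3$ fails and one instead uses $(1+t)\ln(1+t)-t\ge t/3$ together with $\delta\le 1$, and handle the lower tail by the analogous conversion with constant $1/2$.
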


\begin{lemma} \label{lem:redseparated}
Suppose we run Algorithm~\ref{alg:robust_1nn} with parameter $r$. Then, the points marked as red by the algorithm form an $r$-separated subset of the training set.
\end{lemma}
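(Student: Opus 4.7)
The statement is essentially a direct unfolding of the two conditions that define membership in $S_{RED}$, so the plan is short. I would take any two red points $(x_i,y_i),(x_j,y_j) \in S_{RED}$ with $\|x_i - x_j\| \le r$ and show $y_i = y_j$ by chaining three equalities.

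First, because $(x_i,y_i)$ is red, the first condition in the \textbf{if} of Algorithm~\ref{alg:robust_1nn} gives $f(x_i) = y_i$, and the second condition gives $f(x_i) = f(x_j')$ for every $x_j'$ in the training set with $\|x_i - x_j'\| \le r$. Since $(x_j,y_j)$ is such a training point by assumption, this yields $f(x_i) = f(x_j)$. Second, because $(x_j,y_j)$ is also red, the first condition gives $f(x_j) = y_j$. Combining, $y_i = f(x_i) = f(x_j) = y_j$, which is exactly the definition of $r$-separation applied to the red set.

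There is essentially no obstacle here: the lemma is a one-line consequence of the two clauses in the \textbf{if} test. The only care needed is to note that the second clause is quantified over all $x_j$ in $S_n$ (not only over red points), so when we pair two red points we are free to invoke the condition from either one, which is what makes the chain $y_i = f(x_i) = f(x_j) = y_j$ go through. No properties of the \emph{Confident-Label} subroutine, of $\Delta$, $\delta$, or of the data distribution are used; the statement is purely combinatorial and will hold deterministically for every draw of $S_n$.
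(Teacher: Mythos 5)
Your proof is correct and takes essentially the same approach as the paper's: both arguments simply unfold the two clauses of the red test. The paper phrases it contrapositively---if $(x_i,1)$ is red then $f(x_j)=f(x_i)=1$ for every training point within distance $r$, so an oppositely labeled such point fails $f(x_j)=y_j$ and cannot be red---whereas you chain $y_i=f(x_i)=f(x_j)=y_j$ directly; the content is identical.
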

\begin{proof}
Let $f(x_i)$ denote the output of Algorithm~\ref{alg:confident_label} on $x_i$. If $(x_i, 1)$ is a Red point, then $f(x_i) = 1 = f(x_j)$ for all $x_j \in B(x, r)$; therefore, $(x_j, -1)$ cannot be marked as Red by the algorithm as $f(x_j) \neq y_j$. The other case, where $(x_i, -1)$ is a Red point is similar.
\end{proof}

\begin{lemma}\label{lem:redtorobustness}
Let $x \in \calX$ such that Algorithm~\ref{alg:robust_1nn} finds a Red $x_i$ within $B^o(x, \tau)$. Then, Algorithm~\ref{alg:robust_1nn} has robustness radius at least $r - 2 \tau$ at $x$.
\end{lemma}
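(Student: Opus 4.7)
The plan is to exploit two structural facts about the training subset $S'$ returned by Algorithm~\ref{alg:robust_1nn}: every Red point lies in $S'$ by construction, and $S'$ is $r$-separated by Lemma~\ref{lem:redseparated}. Since the output classifier is $1$-NN on $S'$, showing robustness radius at least $r - 2\tau$ at $x$ reduces to showing that for every $x' \in \calX$ with $\|x - x'\| < r - 2\tau$, the label of the $1$-NN of $x'$ in $S'$ matches the label of the $1$-NN of $x$ in $S'$.

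First, I would argue the prediction at $x$ equals $y_i$. Let $x^*$ denote the $1$-NN of $x$ in $S'$. Since $x_i \in S'$ and $\|x_i - x\| < \tau$, we have $\|x^* - x\| < \tau$ and hence $\|x^* - x_i\| < 2\tau \leq r$ (the claim is vacuous unless $\tau \leq r/2$, so we may assume this). By $r$-separation of $S'$, $y^* = y_i$, so $x$ is classified as $y_i$.

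Next, consider an arbitrary $x'$ with $\|x - x'\| < r - 2\tau$. The triangle inequality gives $\|x_i - x'\| < r - \tau$, and since $x_i \in S'$ the $1$-NN $x^{**} \in S'$ of $x'$ satisfies $\|x^{**} - x'\| \leq \|x_i - x'\| < r - \tau$. Suppose for contradiction that $y^{**} \neq y_i$; then $r$-separation of $S'$ forces $\|x^{**} - x_i\| > r$. The goal is to show that the existence of such an $x^{**}$ is incompatible with the bounds on $\|x^{**} - x'\|$ and $\|x' - x_i\|$.

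The main obstacle is precisely this last contradiction. A naive triangle inequality yields only $\|x^{**} - x_i\| < 2(r - \tau)$, which need not contradict $\|x^{**} - x_i\| > r$ in the regime of small $\tau$. Closing this gap will require a sharper geometric argument, likely leveraging the nearest-neighbor inequality $\|x^{**} - x'\| \leq \|x_i - x'\|$ jointly with the constraint that any oppositely-labeled point in $S'$ must lie outside $B(x_i, r)$ --- for instance, by bounding the locus of $x'$ that is closer to some such point than to $x_i$ and verifying that this locus cannot intersect $B(x, r - 2\tau)$.
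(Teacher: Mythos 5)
You are right to stop where you did: the final step does not merely need a sharper argument, it cannot be completed, because the lemma as stated is false. Consider two tight clusters of opposite labels, each containing at least $k_n$ training points, with inter-cluster distance $r + \epsilon$. Every training point then receives its own cluster's confident label, has no opposite-label point within distance $r$, and is marked Red, so $S'$ is the entire training set, which is $r$-separated. Take $x_i$ in the label-$1$ cluster and $x$ at distance just under $\tau$ from $x_i$ in the direction of the other cluster. The $1$-NN decision boundary is the Voronoi bisector between the clusters, at distance about $(r+\epsilon)/2$ from $x_i$, so the robustness radius at $x$ is about $r/2 - \tau$, strictly less than $r - 2\tau$ whenever $\tau < r/2$. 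This is exactly the configuration blocking your contradiction: an oppositely labeled $x^{**} \in S'$ at distance just over $r$ from $x_i$ really can be the nearest neighbor of some $x'$ with $\|x - x'\| < r - 2\tau$, so no refinement of the geometry can rule it out. What your argument does prove, once you add the bisector observation, is robustness radius $r/2 - \tau$: for $\|z - x\| \le \rho$ with $\rho + \tau \le r/2$, you get $\|z - x_i\| < \rho + \tau \le r - \rho - \tau < \|z - x_j\|$ for every oppositely labeled $x_j \in S'$, so the label is $y_i$ on all of $B(x, r/2 - \tau)$; the two-cluster example shows this constant is tight.

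For comparison, the paper's own proof consists of your first two steps followed by exactly the leap you declined to make. It shows that every $x' \in B(x, \tau)$ satisfies $\|x' - x_i\| < 2\tau$ while every oppositely labeled $x_j \in S'$ satisfies $\|x' - x_j\| > r - 2\tau$, and from this concludes robustness radius $r - 2\tau$. But that argument only determines the classifier's output on $B(x, \tau)$ (and even there implicitly needs $2\tau \le r - 2\tau$); for perturbations of size between $\tau$ and $r - 2\tau$ it establishes nothing, and it tacitly equates the robustness radius of $1$-NN with the distance to the nearest oppositely labeled training point, whereas the label actually flips at the bisector, roughly half that distance. So your diagnosis of the obstacle is correct, the paper's proof glosses over it, and the honest statement is the lemma with $r - 2\tau$ replaced by $r/2 - \tau$ --- a repair that propagates through Theorem~\ref{thm:defense} with only a change of constants (robustness radius $r/2 - \tau$ on $X_R$), leaving the qualitative conclusions of Section~\ref{sec:defense} intact.
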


\begin{proof}
For all $x' \in B(x, \tau)$, we have:
\[ \| x' - x_i\| \leq \| x - x_i\| + \| x - x' \| < 2 \tau \]
Since $x_i$ is a Red point, from Lemma~\ref{lem:redseparated}, any $x_j$ in training set output by Algorithm~\ref{alg:robust_1nn} with $y_j \neq y_i$ must have the property that $\|x_i - x_j \| > 2r$. Therefore, 
\[ \| x' - x_j\| \geq \| x_i - x_j \| - \|x' - x_i\| > 2r - 2 \tau \]
Therefore, Algorithm~\ref{alg:robust_1nn} will assign $x'$ the label $y_i$. The lemma follows.
\end{proof}

\begin{lemma}\label{lem:existsred}
Let $B$ be a ball such that: (a) for all $x \in B$, $\eta(x) > \frac{1}{2} + \Delta$ and (b) $\mu(B) \geq \frac{2C_0}{n} (d \log n + \log(1/\delta))$. Then, with probability $\geq 1 - \delta$, all such balls have at least one $x_i$ such that $x_i \in |B \cap X_n|$ and $y_i = 1$.
\end{lemma}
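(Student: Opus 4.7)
The plan is to apply a uniform convergence argument over the class of Euclidean balls, counting only positively labeled training points, and to conclude that any ball satisfying hypotheses (a) and (b) must contain at least one such training point with high probability.

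Define the positive-mass functional $J(B) = \Pr(X \in B, Y = 1) = \int_B \eta(x)\, d\mu(x)$ and its empirical version $\hat{J}(B) = (1/n)\sum_{i=1}^n \mone(x_i \in B, y_i = 1)$. Note that $n \hat{J}(B)$ is exactly the number of positively labeled training points lying in $B$, so the lemma is equivalent to showing $\hat{J}(B) \geq 1/n$ uniformly over all qualifying balls.

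From hypothesis (a), $\eta(x) > 1/2 + \Delta$ on $B$, so $J(B) \geq (1/2)\mu(B)$; combined with hypothesis (b) this gives $J(B) \geq (C_0/n)(d \log n + \log(1/\delta))$. I would then invoke Lemma~\ref{lem:concJn}, which, with probability $\geq 1 - 2\delta$ (absorbable into $\delta$ up to a constant factor), yields the uniform lower bound $\hat{J}(B) \geq J(B) - 2\beta_n^2 - 2\beta_n \sqrt{J(B)}$, where $\beta_n = O(\sqrt{(d \log n + \log(1/\delta))/n})$. Applying AM-GM to the cross term as $2\beta_n \sqrt{J(B)} \leq \beta_n^2 + J(B)/2$ leaves $\hat{J}(B) \geq J(B)/2 - 3\beta_n^2$, and once $C_0$ is taken to be a sufficiently large multiple of the VC constant, the right-hand side exceeds $1/n$.

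The main obstacle is purely constant bookkeeping: the $C_0$ appearing in hypothesis (b) must be compatible with the VC-based constant implicit in Lemma~\ref{lem:concJn}, which in turn comes from Theorem 15 of Chaudhuri--Dasgupta. Once this calibration is in place, the chain of estimates above delivers $\hat{J}(B) \geq 1/n$, and uniformity over qualifying balls $B$ is automatic since Lemma~\ref{lem:concJn} already holds uniformly over all Euclidean balls. No further VC or concentration work is needed beyond what is already recorded in the appendix.
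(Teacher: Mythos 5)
Your proposal is correct, and its skeleton matches the paper's: both lower-bound $J(B) = \E[Y \cdot \mone(X \in B)] \geq \tfrac{1}{2}\mu(B) \geq \tfrac{C_0}{n}(d \log n + \log(1/\delta))$ using hypotheses (a) and (b), then invoke uniform convergence over the VC class of positively-labeled balls to conclude $\hat{J}(B) > 0$. Where you diverge is the concentration step: the paper simply cites Theorem 16 of Chaudhuri--Dasgupta, which directly converts ``true mass above the VC threshold'' into ``nonzero empirical mass'' with probability $\geq 1 - \delta$, whereas you re-derive this implication from the paper's own Lemma~\ref{lem:concJn} (the Bernstein-type bound $|J(B) - \hat{J}(B)| \leq 2\beta_n^2 + 2\beta_n\sqrt{J(B)}$) plus Young's inequality. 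Your route is more self-contained -- it needs nothing beyond what the appendix already proves -- at the cost of the constant calibration you flag (hypothesis (b)'s $C_0$ must dominate the constants hidden in $\beta_n$) and a factor-two loss in the failure probability ($1 - 2\delta$ rather than the stated $1 - \delta$; fixable by running Lemma~\ref{lem:concJn} at $\delta/2$). Two small blemishes, neither fatal: your Young's inequality step should read $2\beta_n\sqrt{J(B)} \leq 2\beta_n^2 + J(B)/2$ (not $\beta_n^2 + J(B)/2$), giving $\hat{J}(B) \geq J(B)/2 - 4\beta_n^2$ rather than $-3\beta_n^2$; and strictly speaking you only need $\hat{J}(B) > 0$, which for an empirical measure on $n$ points is automatically $\geq 1/n$, so the two formulations coincide as you note.
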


\begin{proof}
Observe that $J(B) \geq \frac{C_0}{n} (d \log n + \log(1/\delta))$. Applying Theorem 16 of~\cite{CD10}, this implies that $\hat{J}(B) > 0$, which gives the theorem.
\end{proof}

\begin{lemma}
Fix $\Delta$ and $\delta$, and let $k_n = \frac{3 \log (2n /\delta)}{\Delta^2}$. Additionally, let 
\[ p_n = \frac{k_n}{n} + \frac{C_0}{n}(d \log n + \log(1/\delta) + \sqrt{k_n(d \log n + \log(1/\delta)}),\] 
where $C_0$ is the constant in Theorem 15 of~\cite{CD10}. Define:
\begin{eqnarray*}
 S_{RED} & = & \{ (x_i, y_i) \in S_n | x_i \in \calX^+_{r, \Delta, p_n} \cup \calX^-_{r, \Delta, p},\\
&&  y_i = \frac{1}{2}sgn\left(\eta(x_i) - \frac{1}{2}\right) + \frac{1}{2} \} \end{eqnarray*}
Then, with probability $\geq 1 - \delta$, all $(x_i, y_i) \in S_{RED}$ are marked as Red by Algorithm~\ref{alg:robust_1nn} run with parameters $r$, $\Delta$ and $\delta$.
\label{lem:markred}
\end{lemma}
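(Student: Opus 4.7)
The goal is to build a high-probability event on which every $(x_i,y_i)\in S_{RED}$ satisfies both conditions Algorithm~\ref{alg:robust_1nn} uses to mark a point as Red: (i) $f(x_i)=y_i$, and (ii) $f(x_j)=f(x_i)$ for every $x_j\in B(x_i,r)\cap S_n$, where $f(\cdot)$ denotes the output of Confident-Label with parameters $\Delta$ and $\delta$. The argument has two main ingredients: a uniform density-concentration statement controlling the radius of $k_n$-NN balls, and a Chernoff-type deviation bound controlling the empirical label averages at every query point that the algorithm evaluates.

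First I would invoke Lemma~\ref{lem:densityconc} to obtain, with probability at least $1-\delta/2$ and with the specified choice of $p_n$, the uniform bound $\|x-X^{(k_n)}(x)\|\le r_{p_n}(x)$ for every $x$. Combined with the definition of the strict interiors, this gives the crucial geometric consequence: for any $x_i\in\calX^+_{r,\Delta,p_n}$, setting $x'=x_i$ shows the $k_n$ nearest neighbors of $x_i$ all fall inside $B(x_i,r_{p_n}(x_i))$, where $\eta>1/2+\Delta$; and for any $x_j$ in $B^o(x_i,r)$, the same bound applied at $x'=x_j$ shows the $k_n$ nearest neighbors of $x_j$ also satisfy $\eta>1/2+\Delta$. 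The symmetric statement holds for $\calX^-_{r,\Delta,p_n}$.

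Next I would apply Theorem~\ref{thm:chernoff} to the empirical average $\bar{y}(x)=(1/k_n)\sum_j Y^{(j)}(x)$, conditioning on the locations of the $k_n$ nearest neighbors. With the calibration $k_n=3\log(2n/\delta)/\Delta^2$, the exponent matches $\delta/(2n)$, so a union bound over at most $n$ query points (the $x_i$ themselves and, for each $x_i\in S_{RED}$, the points $x_j\in B(x_i,r)\cap S_n$, which are all already among the $n$ training points) yields, with probability at least $1-\delta/2$, that $\bar{y}(x)$ falls on the correct side of the Confident-Label threshold for every relevant query point. Intersecting with the density-concentration event gives the full event of probability $\ge 1-\delta$.

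Finally, on this event I would verify (i) and (ii) pointwise. For $(x_i,y_i)\in S_{RED}$ with $x_i\in\calX^+_{r,\Delta,p_n}$ (the $\calX^-$ case is symmetric), the conditional expectation of $\bar{y}(x_i)$ exceeds $1/2+\Delta$, and the Chernoff-consistent $\bar{y}(x_i)$ therefore satisfies $\bar{y}(x_i)>1/2+\Delta$, so Confident-Label returns $1=y_i$, giving (i). Applying the same reasoning to each $x_j\in B(x_i,r)\cap S_n$ (using that $x_j\in B^o(x_i,r)$ places $x_j$ inside the scope of the strict-interior condition at $x_i$) gives $f(x_j)=1=f(x_i)$, establishing (ii), and hence $(x_i,y_i)$ is marked Red. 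The main obstacle I anticipate is the tight interplay between the strict inequality $\eta>1/2+\Delta$ in the interior definition and the strict algorithmic threshold $\bar{y}>1/2+\Delta$: the Chernoff calibration must be chosen so that the deviation is absorbed without crossing the boundary, which is exactly what motivates the choice $k_n=3\log(2n/\delta)/\Delta^2$. A minor subtlety to handle is the boundary case $\|x_i-x_j\|=r$, since the interior definition uses the open ball $B^o(x_i,r)$; this is resolved either by appealing to absolute continuity of $\mu$ (so $x_j$ almost surely lies in the open ball) or by strengthening $B^o$ to $B$ at negligible cost to the statement.
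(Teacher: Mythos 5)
Your proposal follows essentially the same route as the paper's own proof: invoke Lemma~\ref{lem:densityconc} so that, for every training point $x_j$ within distance $r$ of an $x_i$ in the strict interior, all $k_n$ nearest neighbors of $x_j$ lie in a region where $\eta > 1/2 + \Delta$ (resp.\ $< 1/2 - \Delta$); then apply Theorem~\ref{thm:chernoff} with a union bound over the $n$ training points so that Confident-Label returns $y_i$ at $x_i$ and at every such $x_j$, which is exactly the marking rule of Algorithm~\ref{alg:robust_1nn}. Two points of comparison are worth recording. First, your probability accounting has a small slip: with $k_n = 3\log(2n/\delta)/\Delta^2$ the per-point Chernoff bound is $2e^{-k_n\Delta^2/3} = \delta/n$, not $\delta/(2n)$, so your union bound costs $\delta$ rather than $\delta/2$ and your two events total $3\delta/2$; the paper spends its budget the same loose way (it also charges $\delta$ to the Lemma~\ref{lem:densityconc} event and $\delta$ to the union bound while claiming $1-\delta$ overall), so this is a shared, fixable constant-factor issue. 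Second, the step you yourself flag as delicate is genuinely the weak point, and your proposed resolution does not work as stated: if $\E[\bar y(x_j)]$ exceeds $1/2+\Delta$ by an arbitrarily small amount, a deviation-$\Delta$ Chernoff bound only yields $\bar y > 1/2$, never $\bar y > 1/2 + \Delta$, no matter how $k_n$ is calibrated --- yet Confident-Label needs $\bar y > 1/2+\Delta$ to return a non-$\bot$ label. However, the paper's own proof elides exactly the same gap (it bounds only $\Pr\left(\sum_l Y^{(l)}(x_j) < 1/2\right)$ before concluding that Algorithm~\ref{alg:confident_label} reports $g(x_i)$), and the clean repair in both cases is the same: either define the interiors with margin $2\Delta$ or run Confident-Label at threshold $\Delta/2$. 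Finally, your handling of the boundary case $\|x_i - x_j\| = r$ (closed ball in the algorithm versus $B^o(x_i,r)$ in the interior definition) addresses a technicality the paper passes over silently. In short, your attempt matches the paper's argument, including its one loose step, and is in places more careful.
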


\begin{proof}
Consider a $(x_i, y_i) \in S_{RED}$ such that $x_i \in X_n \cap \calX^+_{r, \Delta, p_n}$, and consider any $(x_j, y_j) \in S_n$ such that $x_j \in B(x_i, r)$. From Lemma~\ref{lem:densityconc}, for all such $x_j$, $\| x_j - X^{(k_n)}(x_j)\| \leq r_{p_n}(x_j)$; this means that all $k_n$-nearest neighbors $x''$ of such an $x_j$ have $\eta(x'') > \frac{1}{2} + \Delta$. 

Therefore, $\E[ \sum_{l = 1}^{k_n} Y^{(l)}(x_j) ] \geq k_n(1/2 + \Delta)$; by Theorem~\ref{thm:chernoff}, this means that for a specific $x_j$, $\Pr(\sum_{l=1}^{k_n} Y^{(l)}(x_j) < 1/2) \leq 2e^{-k_n \Delta^2/3}$, which is $\leq \delta/n$ from our choice of $k_n$. By an union bound over all such $x_j$, with probability $\geq 1 - \delta$, we see that Algorithm~\ref{alg:confident_label} reports the label $g(x_i)$ on all such $x_i$, which is the same as $y_i$ by the definition of interiors; $x_i$ therefore gets marked as Red.  
\end{proof}

Finally, we are ready to prove the main theorem of this section, which is a slightly more technical form of Theorem~\ref{thm:defense}. 

\begin{theorem}
Fix a $\Delta_n$, and pick $k_n$ and $p_n$ as in Lemma~\ref{lem:markred}. Suppose we run Algorithm~\ref{alg:robust_1nn} with parameters $r$, $\Delta_n$ and $\delta$. Consider the set:
\begin{eqnarray*}
 X_R & = & \Bigg{\{} x \Big{|} x \in \calX^+_{r + \tau, \Delta_n, p_n} \cup \calX^-_{r + \tau, \Delta_n, p_n}, \\
&& \mu(B(x, \tau)) \geq \frac{2C_0}{n}(d \log n + \log(1/\delta)) \Bigg{\}} ,\end{eqnarray*}
where $C_0$ is the constant in Theorem 15 of~\cite{CD10}. Then, with probability $\geq 1 - 2\delta$ over the training set, Algorithm~\ref{alg:robust_1nn} has robustness radius $\geq r - 2 \tau$ on $X_R$. Additionally, its \raccuracy\ at radius $r - 2 \tau$ is at least $\E[ \eta(X) \cdot \mone(X \in \calX^+_{r + \tau, \Delta_n, p_n})] + \E[ (1 - \eta(X)) \cdot \mone(X \in \calX^-_{r + \tau, \Delta_n, p_n})]$. 
\end{theorem}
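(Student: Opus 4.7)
The plan is to combine the four preceding lemmas into a short reduction. Given $x \in X_R$, the strategy is to exhibit a training point $x_i$ that (i) lies inside $B^o(x, \tau)$ and (ii) is marked \textsc{Red} by Algorithm~\ref{alg:robust_1nn}; Lemma~\ref{lem:redtorobustness} then immediately gives robustness radius at least $r - 2\tau$ at $x$. The two high-probability events I will union-bound are the ``a training point exists in every sufficiently dense, positively-labeled ball'' event of Lemma~\ref{lem:existsred} and the ``every interior point gets marked \textsc{Red}'' event of Lemma~\ref{lem:markred}, each costing $\delta$, giving the overall $1 - 2\delta$.

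Fix $x \in X_R$ and, without loss of generality, suppose $x \in \calX^+_{r+\tau, \Delta_n, p_n}$. First I would argue that $\eta(x') > \tfrac{1}{2} + \Delta_n$ for every $x' \in B(x,\tau)$: since $B(x,\tau) \subset B^o(x, r+\tau)$ and $x'$ lies in $B(x', r_{p_n}(x'))$, the defining condition of $\calX^+_{r+\tau,\Delta_n,p_n}$ yields the bound. Combined with the density hypothesis $\mu(B(x,\tau)) \geq \tfrac{2C_0}{n}(d \log n + \log(1/\delta))$ built into $X_R$, Lemma~\ref{lem:existsred} supplies a training example $(x_i, 1) \in S_n$ with $x_i \in B(x,\tau)$.

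Next I would verify that $x_i$ belongs to the smaller interior $\calX^+_{r, \Delta_n, p_n}$. This is the ``radius-shrinking'' step: for any $x' \in B^o(x_i, r)$, the triangle inequality gives $\|x' - x\| < r + \tau$, so $x' \in B^o(x, r+\tau)$, and then the inner condition on $x'' \in B(x', r_{p_n}(x'))$ transfers from $x$ to $x_i$. Since $\eta(x_i) > 1/2$ and $y_i = 1$, the pair $(x_i, y_i)$ belongs to the set $S_{RED}$ of Lemma~\ref{lem:markred}, and that lemma certifies $x_i$ is actually marked \textsc{Red} by Algorithm~\ref{alg:robust_1nn}. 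Applying Lemma~\ref{lem:redtorobustness} to the \textsc{Red} witness $x_i \in B^o(x,\tau)$ gives the robustness conclusion. The case $x \in \calX^-_{r+\tau, \Delta_n, p_n}$ is symmetric.

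For astuteness, conditional on the $1 - 2\delta$ event just described, the classifier outputs the Bayes label on every $x \in X_R$. On $\calX^+_{r+\tau,\Delta_n,p_n}$ the true label equals $1$ with probability $\eta(x)$, contributing $\E[\eta(X)\mone(X \in \calX^+_{r+\tau,\Delta_n,p_n})]$, and symmetrically on the minus side; integrating yields the claimed lower bound. The main obstacle is conceptually mild but needs care: the radius-shrinking step that converts a $\calX^+_{r+\tau}$-membership around $x$ into a $\calX^+_{r}$-membership around a nearby $x_i$ is what glues together Lemma~\ref{lem:existsred} (which produces the candidate training point from density) and Lemma~\ref{lem:markred} (which requires a stronger interior hypothesis to certify \textsc{Red}-marking). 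Degenerate cases such as $r_{p_n}(x') = 0$ or $x'$ on the boundary of $\supp(\mu)$ should be harmless under the density assumption, but would be the only place requiring additional bookkeeping in a fully formal write-up.
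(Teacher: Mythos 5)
Your proposal is correct and follows essentially the same route as the paper's own proof: invoke Lemma~\ref{lem:existsred} via the density condition to produce a correctly-labeled training point $x_i \in B(x,\tau)$, shrink the interior radius from $r+\tau$ to $r$ to place $x_i \in \calX^+_{r,\Delta_n,p_n}$, certify $x_i$ is marked Red via Lemma~\ref{lem:markred}, and conclude with Lemma~\ref{lem:redtorobustness} plus a union bound over the two $\delta$-events. Your write-up is in fact slightly more explicit than the paper's (spelling out the triangle-inequality radius-shrinking step and the astuteness integration), but there is no substantive difference in approach.
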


\begin{proof}
Due to the condition on $\mu(B(x, \tau))$, from Lemma~\ref{lem:existsred}, with probability $\geq 1 - \delta$, all $x \in X_R$ have the property that there exists a $(x_i, y_i)$ in $S_n$ such that $y_i= g(x_i)$ and $x_i \in B(x, \tau)$. Without loss of generality, suppose that $x \in \calX^+_{r + \tau, \Delta_n, p_n}$, so that $\eta(x) > 1/2 + \Delta_n$. Then, from the properties of $r$-robust interiors, this $x_i \in \calX^+_{r, \Delta_n, p_n}$.

From Lemma~\ref{lem:markred}, with probability $\geq 1 - \delta$, this $(x_i, y_i)$ is marked Red by Algorithm~\ref{alg:robust_1nn} run with parameters $r$, $\Delta_n$ and $\delta$. The theorem now follows from an union bound and Lemma~\ref{lem:redtorobustness}.
\end{proof}

\section{Experiment Visualization and Validation}

First, we show adversarial examples created by different attacks on the MNIST dataset in order to illustrate characteristics of each attack.  
Next, we show the subset of training points selected by Algorithm~\ref{alg:robust_1nn} on the halfmoon dataset. The visualization illustrates the intuition behind Algorithm~\ref{alg:robust_1nn} and also validates its implementation. Finally, we validate how effective the black-box subsitute classifiers emulate the target classifier.

\subsection{Adversarial Examples Created by Different Attacks} 
Figure~\ref{fig:adv} shows adversarial examples created on MNIST digit 1 images with attack radius $r = 3$.  
First, we observe that the perturbations added by direct attack, white-box kernel attack and black-box kernel attack are clearly targeted: either a faint horizontal stroke or a shadow of digit 7 are added to the original image. The perturbation budget is used on "key" pixels that distinguish digit 1 and digit 7, therefore the attack is effective.
On the contrary, black-box attacks with neural nets substitute adds perturbation to a large number of pixels.  
While such perturbation often fools a neural net classifier, it is not effective against nearest neighbors. Consider a pixel that is dark in most digit 1 and digit 7 training images; adding brightness to this pixel increases the distance between the test image to training images from both classes, therefore may not change the nearest neighbor to the test image. 

Figure~\ref{fig:adv} also illustrates the break-down attack radius of visual similarity. At $r = 3$, the true class of adversarial examples created by effective attacks becomes ambiguous even to humans.
Our defense is successful as the Robust\_1NN classifiers still have non-trivial classification accuracy at such attack radius.    
Meanwhile, we should not expect robustness against even larger attack radius since the adversarial examples at $r=3$ are already close to the boundary of human perception.

\subsection{Training Subset Selected by Robust\_1NN} 
Figure~\ref{fig:halfmoon} shows the training set selected by Robust\_1NN on a halfmoon training set of size $2000$. On the original training set, we see a noisy region between the two halfmoons where both red and blue points appear. Robust\_1NN cleans training points in this region so as to create a gap between the red and blue halfmoons, and the gap width increases with defense radius $r$.

\subsection{Performance of Black-box Attack Substitutes}
We validate the black-box substitute training process by checking the substitute's accuracy on its training set, the clean test set and the percentage of predictions agreeing with the target classifier on the clean test set. The results are shown in Table~\ref{table:sub}. 
For the halfmoon and MNIST dataset, the substitute classifiers both achieve high accuracy on both the training and test sets, and are also consistent with the target classifier on the test set.   
The subsitutute classifiers do not emulate the target classifier on the Abalone dataset as close as on the other two datasets due to the high noise level in the Abalone dataset. Nonetheless, the substitute classifier still achieve test time accuracy comparable to the target classifier.

\end{document}